\documentclass[]{beingbeyond}
\usepackage{enumitem}
\usepackage[toc,page,header]{appendix}

\usepackage[utf8]{inputenc} 
\usepackage[T1]{fontenc}    
\usepackage{hyperref}       
\usepackage{url}            
\usepackage{array}          
\usepackage{booktabs}       
\usepackage{amsfonts}       
\usepackage{nicefrac}       
\usepackage{microtype}      
\usepackage{xcolor}         
\usepackage{xspace}
\usepackage{bm}
\usepackage{bbm}
\usepackage{tabularx}
\usepackage{amssymb}
\usepackage{enumitem}
\usepackage{amsmath}
\usepackage{mathtools}
\usepackage{amsthm}
\usepackage{multirow}
\usepackage{makecell}
\usepackage{color}
\usepackage{colortbl}
\usepackage{adjustbox}
\usepackage{caption}
\usepackage{graphicx}
\usepackage{pifont}
\usepackage{wrapfig}
\usepackage{mdframed}
\usepackage{multicol}

\newcommand{\method}{FAST\xspace}

\theoremstyle{plain}
\newtheorem{theorem}{Theorem}[section]
\newtheorem{proposition}[theorem]{Proposition}

\theoremstyle{definition}

\theoremstyle{remark}

\title{General Humanoid Whole-Body Control via Pretraining and Fast Adaptation}

\author{{\bfseries Zepeng Wang$^{1,2}$ \enspace Jiangxing Wang$^{2}$ \enspace Shiqing Yao$^{2}$ \enspace Yu Zhang$^{2}$ \enspace Ziluo Ding$^{2}$ \enspace Ming Yang$^{2,3}$ \enspace Yuxuan Wang$^{2,3}$ \enspace Haobin Jiang$^{2,3}$ \enspace Chao Ma$^{1}$ \enspace Xiaochuan Shi$^{1,\dagger}$ \enspace Zongqing Lu$^{2,3,\ddagger}$}}

\affiliation{{$^{1}$Wuhan University \quad $^{2}$BeingBeyond \quad $^{3}$Peking University}}

\webpage{\url{https://research.beingbeyond.com/fast}}

\firstfig[width=0.85\linewidth][\textwidth]
  {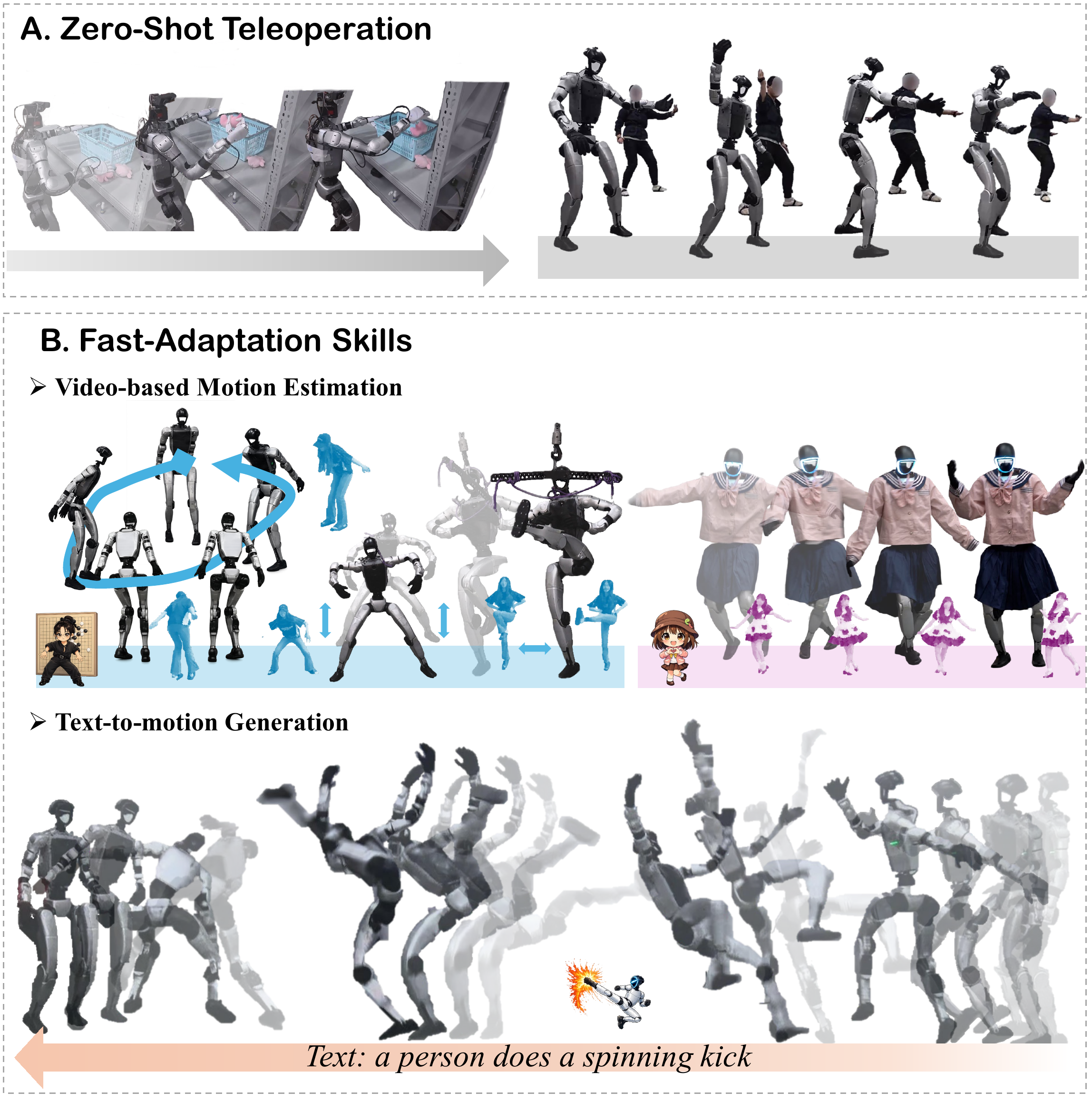}
  {\textbf{FAST} is a \textbf{general} and \textbf{fast-adaptation} framework for humanoid whole-body control. It enables zero-shot high-dynamic motion tracking and real-time teleoperation with strong robustness and balance beyond simulation. For low-quality or out-of-distribution motion references, FAST further supports lightweight residual adaptation for rapid and stable specialization.}
  {fig:main-figure}

\abstract{
Learning a general whole-body controller for humanoid robots remains challenging due to the diversity of motion distributions, the difficulty of fast adaptation, and the need for robust balance in high-dynamic scenarios. Existing approaches often require task-specific training or suffer from performance degradation when adapting to new motions.
In this paper, we present \textbf{FAST}, a general humanoid whole-body control framework that enables \underline{F}ast \underline{A}daptation and \underline{S}table Motion \underline{T}racking. FAST introduces Parseval-Guided Residual Policy Adaptation, which learns a lightweight delta action policy under orthogonality and KL constraints, enabling efficient adaptation to out-of-distribution motions while mitigating catastrophic forgetting. To further improve physical robustness, we propose Center-of-Mass-Aware Control, which incorporates CoM-related observations and objectives to enhance balance when tracking challenging reference motions.
Extensive experiments in simulation and real-world deployment demonstrate that FAST consistently outperforms state-of-the-art baselines in robustness, adaptation efficiency, and generalization. 
}

\checkdata[Date]{February 12, 2026}

\begin{document}

\maketitle

\begingroup
\renewcommand\thefootnote{\fnsymbol{footnote}} 
\setcounter{footnote}{0}
\footnotetext[2]{Correspondence to Xiaochuan Shi $<$shixiaochuan@whu.edu.cn$>$.}
\footnotetext[3]{Project led by Zongqing Lu $<$lu@beingbeyond.com$>$.}
\endgroup

\section{Introduction}
In robotics, achieving \emph{general} whole-body control (WBC) is a critical step toward advancing humanoid intelligence. Rather than mastering isolated skills \cite{he2025asap,xie2025kungfubot}, general WBC aims to extend the boundary of controllable motions and coordination patterns, enabling a single controller to execute diverse whole-body behaviors \cite{luo2025sonic}. Such capability is essential for deploying humanoid robots in realistic settings, including teleoperation for both hazardous operations and large-scale data collection in real-world settings \cite{ze2025twist2}, reliable execution of complex whole-body motions \cite{su2025hitter}, and serving as the low-level motor foundation for embodied intelligence systems \cite{liao2025beyondmimic}.

Despite recent progress in learning-based whole-body control, existing methods remain far from achieving truly \emph{general} motion tracking. Most approaches are trained on curated motion datasets such as AMASS \cite{mahmood2019amass}, OMOMO \cite{li2023object}, and LaFan1 \cite{harvey2020robust}, which are collected under controlled settings and exhibit relatively clean and consistent kinematics. In contrast, real-world motion references often originate from heterogeneous sources, including offline motion capture, real-time teleoperation, text-to-motion generation, and monocular video-based estimation. These sources vary substantially in motion quality, noise, temporal consistency, and physical plausibility, inducing significant distribution shifts \cite{li2025train}.

Under such shifts, existing controllers frequently struggle to robustly track motions that deviate from the training distribution. This issue is particularly pronounced for highly dynamic or stylistically complex motions \cite{pan2025agility,han2025kungfubot2}, where limited policy expressiveness and stability constraints lead to tracking failures even for motions nominally similar to training data. Such motions effectively act as outliers that are difficult to cover with finite datasets \cite{chen2025gmt}. These limitations motivate the need for whole-body controllers that combine strong zero-shot robustness with the ability to rapidly adapt to novel and low-quality motion inputs.

A direct solution is to scale up motion datasets, model capacity, and computational resources to cover a broader range of motions and embodiments \cite{luo2025sonic,he2025viral}. In theory, large-scale pretraining can improve generalization. In practice, however, this paradigm is constrained by real-world control requirements.
Whole-body control demands low-latency, high-frequency inference to maintain closed-loop stability \cite{liao2025beyondmimic}. Humanoid robots typically operate under limited onboard compute and power budgets, making large-scale models difficult to deploy at control frequencies \cite{ahmad2026vision}. Moreover, increased model size introduces additional inference latency, which can directly degrade balance and stability.
Under these constraints, a more practical direction is fast adaptation. Rather than relying solely on exhaustive pretraining, policies should rapidly specialize to new motion distributions through efficient fine-tuning or lightweight adaptation, complementing large-scale training while satisfying deployment and control constraints.

To address robust humanoid whole-body control under distribution shifts, we propose \textbf{FAST}, a general framework that combines a pretrained Center-of-Mass-Aware controller with lightweight residual learning for fast adaptation to unseen and out-of-distribution motions. This design preserves robust zero-shot performance while maintaining physical stability under relatively low-quality reference motions.

Our main contributions are summarized as follows:
\begin{itemize}
\item We propose \textbf{FAST}, a general and fast-adaptation framework for humanoid whole-body control that enables efficient adaptation to out-of-distribution motions while maintaining strong zero-shot robustness.
\item We introduce \textbf{Center-of-Mass-Aware Control}, which explicitly incorporates CoM-related observations and objectives to improve balance and stability when tracking challenging or physically inconsistent motion references.
\item We demonstrate that FAST supports a wide range of whole-body control scenarios, including zero-shot high-dynamic motion tracking, fast adaptation to relatively low quality motions from text and video, and real-time whole-body teleoperation, with consistent gains in robustness and generalization in both simulation and real-world experiments.
\end{itemize}


\section{Related Work}
\subsection{Learning-based Humanoid Whole-Body Control}
Learning-based approaches have achieved significant progress in humanoid whole-body control, enabling a wide range of skills such as locomotion \cite{wang2025crosser,wang2025beamdojo}, agile motions \cite{xie2025kungfubot}, and recovery behaviors \cite{huang2025learning}. However, most methods typically rely on task-specific reward design, which limits scalability and generality. In contrast, whole-body motion tracking leverages retargeted human motion as reference trajectories through kinematic retargeting, eliminating manual reward engineering while promoting coordinated and more generalizable humanoid control.

For learning diverse humanoid motions within a general policy, prior work has explored improvements along several complementary directions \cite{zhang2025track,he2025omnih2o,he2025hover}. Architectural approaches improve scalability by decoupling upper- and lower-body control \cite{ding2025jaeger} or adopting expert-based paradigms to reduce cross-motion interference \cite{wang2025experts}. Data-centric methods emphasize curated large-scale datasets \cite{ji2024exbody2} or augment training with extreme balance recovery to enhance robustness \cite{pan2025agility}. From the perspective of tracking objectives, GMT \cite{chen2025gmt} prioritizes local pose and velocity consistency for stable long-horizon tracking, while KungfuBot2 \cite{han2025kungfubot2} enforces consistency via local style and global stability constraints. More recently, SONIC \cite{luo2025sonic} demonstrates that scaling model capacity and data volume enables foundation-style policies to generalize across broad motion distributions.

In contrast, FAST emphasizes fast and stable adaptation under distribution shift, rather than scaling model size or retraining task-specific policies, enabling robust whole-body control for high-dynamic and out-of-distribution motions.

\subsection{Fast Adaptation for Humanoid Control}
Fast adaptation in humanoid control has been widely studied to bridge distribution gaps between reference motions and physical execution. A common paradigm is residual reinforcement learning, where a post-trained residual policy compensates for mismatches between kinematic references and robot dynamics \cite{johannink2019residual}. Representative methods such as ResMimic \cite{zhao2025resmimic} and ASAP \cite{he2025asap} apply residual learning to refine motion tracking under physical constraints. However, these approaches mainly target sim-to-real dynamics correction and are not designed for rapid adaptation to novel or long-tailed motion distributions.

Another line of work improves adaptation efficiency by leveraging pre-trained priors or skill representations \cite{peng2022ase,tessler2023calm}. Hierarchical frameworks such as SPiRL \cite{pertsch2021accelerating} constrain downstream policy updates via KL regularization to remain close to learned skill priors, while large-scale pre-training approaches like Octo \cite{mees2024octo} demonstrate that fine-tuning pre-trained models significantly outperforms training from scratch. Nevertheless, these methods rely on the transferability and expressiveness of the learned priors, which can limit adaptation when target motions deviate substantially from pre-training distributions.

To address highly diverse and challenging motion datasets, more targeted adaptation mechanisms have been proposed. PHC \cite{luo2023perpetual} incrementally extends policy capacity by adding new motion-specific modules while freezing existing ones to mitigate forgetting. UniTracker \cite{yin2025unitracker} adopts a lightweight residual decoder on top of a frozen universal policy to capture extreme out-of-distribution motions efficiently.

Distinct from these directions, FAST combines residual adaptation with explicit stability regularization, enabling fast adaptation to novel motion distributions while preserving robustness and prior tracking capabilities.

\section{Method}

We present a unified framework, FAST, for general humanoid whole-body control that combines large-scale motion pretraining with stable and efficient policy adaptation, as illustrated in Figure~\ref{fig:framework}. A motion tracking policy is trained to follow diverse reference motions while explicitly regulating balance and stability through Center-of-Mass–Aware objectives and observations. To enable fast specialization to new motion distributions, we introduce a residual adaptation scheme with Parseval regularization and KL-constrained updates, allowing efficient adaptation without degrading previously learned behaviors.

\begin{figure*}[t]
	\centering
        \includegraphics[width=\linewidth]{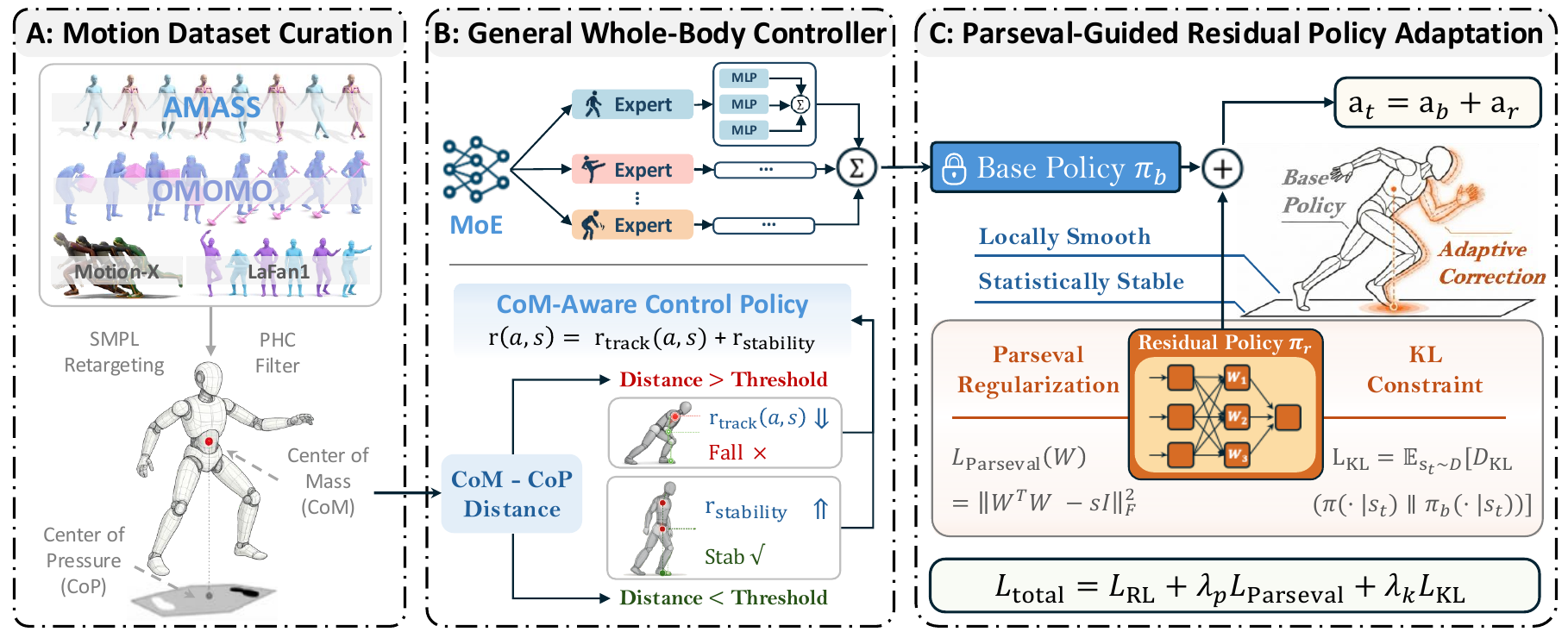}
\caption{\textbf{An overview of FAST.} Our framework consists of three stages.
(1) We construct a curated humanoid motion dataset via human-to-humanoid retargeting with auxiliary physical signals.
(2) We train a general whole-body controller with a Mixture-of-Experts architecture and Center-of-Mass-Aware control.
(3) We perform fast adaptation via Parseval-guided residual policy learning.}
 \label{fig:framework}
\end{figure*}

\subsection{Motion Dataset Curation}
\label{method:retarget}

\noindent \textbf{Human-to-Humanoid Retargeting}. We construct a large-scale humanoid motion dataset by retargeting publicly available human motion capture datasets, including AMASS \cite{mahmood2019amass} and OMOMO \cite{li2023object}, represented in the SMPL format. Additionally, we also collect a small in-house dataset using our own MoCap system. The retargeting and PHC-filtering process follows the previous work \cite{wang2025experts}. These datasets are used mainly for the large-scale pretraining of the general whole-body controller. To evaluate fast adaptation and cross-distribution generalization, we additionally incorporate in-house motion data collected via Inertial Motion Capture, together with LaFan1 \cite{harvey2020robust} and MotionX \cite{lin2023motion}, which contain more dynamic and device-specific motion patterns.

\noindent \textbf{Data Augmentation for Motion Diversity}. To enhance motion diversity and robustness to distribution shifts, we augment retargeted humanoid motions by adjusting global translational velocity and perturbing lower-limb joint configurations. This produces a broader range of motion speeds and kinematic variations while preserving physically coherent dynamics. More details are in Appendix \ref{data_augmentation}.

\noindent \textbf{Auxiliary Physical Signals}. In addition to kinematic data, we augment each motion with contact masks derived from foot–ground interactions, and include the center of mass (CoM) and center of pressure (CoP) computed following prior work \cite{xie2025kungfubot}. These physically grounded signals provide explicit balance and support cues, enabling more stable and physically plausible motion tracking.

\subsection{General Whole-Body Controller}
A general motion tracking policy is trained on the motion dataset via Reinforcement Learning. At each timestep, the policy observation contains the proprioceptive information. In addition, the observation also includes command information from the reference motion at that timestep, consisting of both joint angles and selected keypoint positions, as well as the corresponding velocities. The policy is trained using Proximal Policy Optimization (PPO) \cite{schulman2017proximal} and outputs actions as whole-body target joint positions for joint-level proportional-derivative (PD) controllers. More details about the observation design of our framework are provided in the Appendix \ref{training:obs}.

\noindent \textbf{Policy Architecture}. Following prior work \cite{chen2025gmt,wang2025experts}, we adopt a mixture-of-experts (MoE) policy architecture to handle diverse motion dynamics. The policy consists of multiple expert networks, each implemented as a three-layer MLP, and a gating network that outputs state-dependent mixture weights. At each timestep, the final action is a weighted sum of the expert outputs.

This design is motivated by the fact that different motions exhibit distinct dynamic properties. Training a single policy on highly diverse motions often leads to mutual interference during learning. The MoE structure allows different experts to specialize in different motion dynamics, while the gating network learns to combine them based on the current state and reference command.

\textbf{Center-of-Mass-Aware Control}. In the general whole-body control, especially for teleoperation, prior methods \cite{xie2025kungfubot} rely on curated training pipelines that filter out aggressive or unstable motions, while in real-world deployment, the policy inevitably encounters aggressive and unseen references that cannot be filtered in advance. Instead of enforcing strict tracking, we explicitly design the policy to conservatively follow such motions to ensure safe execution. 

Concretely, we augment the observation with reference Center-of-Mass (CoM) and Center-of-Pressure (CoP) signals, and guide conservative behavior through two reward mechanisms. First, when the reference CoM–CoP distance exceeds a threshold, all tracking-related reward coefficients are progressively down-weighted, allowing the policy to relax tracking under aggressive references. Second, we introduce an explicit stability reward that encourages minimizing the distance between the humanoid’s actual CoM and CoP, promoting balance maintenance (See \ref{training:reward} for details). These CoM-aware designs enable a dynamic trade-off between tracking accuracy and physical stability, improving robustness under challenging motions.

\subsection{Parseval-Guided Residual Policy Adaptation}

\noindent \textbf{Delta Policy Parameterization}. To enable efficient adaptation to new motion distributions while preserving the generality of the pretrained controller, we adopt a residual policy on top of a fixed base model. The base policy $\pi_b$ is a pretrained whole-body controller introduced earlier, for which direct end-to-end fine-tuning is computationally expensive and often inefficient in the low-data adaptation setting.

We therefore introduce a lightweight residual policy $\pi_r$, implemented as a three-layer MLP, which predicts additive action corrections conditioned on the same observations as $\pi_b$. During adaptation, the base policy is fully frozen and only the residual policy is optimized, while the critic is initialized from the pretrained model and further fine-tuned. The final action is given by $a_t = a_t^b + a_t^r$.

\noindent \textbf{Parseval Regularization for Stable Adaptation}. While the residual policy reduces the parameter space for adaptation, unconstrained fine-tuning can still be unstable and prone to suboptimal convergence when adapting to new motion distributions that differ substantially from the pretraining domain. To address these challenges, we incorporate Parseval regularization, which both \textbf{\textit{stabilizes optimization and encourages the network to learn diverse, approximately orthogonal feature directions}}. This allows the residual policy to explore beyond the base policy's original behavior, facilitating escape from local optima in the new domain.

Specifically, we apply Parseval regularization \cite{chung2024parseval} to all linear layers of the residual policy except the final output layer. 
For a weight matrix $W \in \mathbb{R}^{m \times n}$, the regularization term is defined as
\begin{equation}
L_{\text{Parseval}}(W) = \| W^\top W - s I \|_F^2 ,
\end{equation}
where $s>0$ is a scaling factor, $I$ denotes the identity matrix of appropriate dimension, and $\|\cdot\|_F$ is the Frobenius norm. 
This regularizer encourages the columns of $W$ to be approximately orthonormal, thereby implicitly constraining the layer’s spectral norm and limiting extreme scaling along any single feature direction. 
As a result, it promotes smoother gradient propagation and better-conditioned feature representations, enabling both stable training and effective exploration of new feature directions during adaptation.

The residual policy is trained with the combined objective
\begin{equation}
L_{\text{total}} = L_{\text{RL}} + \lambda_p \sum_{l=1}^{n-1} L_{\text{Parseval}}(W_l),
\end{equation}
where \( L_{\text{RL}} \) is the standard reinforcement learning loss, \( W_l \) are the weight matrices of all layers except the last, and \( \lambda_p > 0 \) controls the regularization strength. Empirically, Parseval-regularized residual adaptation leads to more stable optimization and improved sample-efficient learning across new motion distributions, successfully escaping local optima that unconstrained updates may get trapped in.

\noindent \textbf{KL-Constrained Policy Update}. While residual parameterization and Parseval regularization constrain adaptation in parameter and representation spaces, they do not by themselves prevent the adapted model from deviating excessively from the pretrained model. 
This is particularly important in our setting, where adaptation is performed after pretraining and relies only on newly encountered motions, without revisiting the source-domain dataset. As a result, the residual policy receives no supervision on previously learned motions, making explicit constraints on policy drift essential.
To mitigate catastrophic drift and preserve general motion tracking capabilities, we introduce a KL-divergence constraint that anchors the adapted model to the base model.

Specifically, we regularize the combined policy $\pi$, defined by $a_t = a_t^b + a_t^r$, to remain close to the base policy $\pi_b$ in terms of action distributions:
\begin{equation}
L_{\text{KL}} = \mathbb{E}_{s_t \sim D} \big[ D_{\text{KL}}(\pi(\cdot|s_t) \| \pi_b(\cdot|s_t)) \big],
\end{equation}
where $D$ is the state distribution induced by the adapted policy. The KL constraint encourages small, task-specific corrections by implicitly limiting residual action magnitude. As a result, the combined model remains close to the base policy’s output, allowing the residual policy to generalize to states from the original distribution even though it is trained solely on new motion data. Empirically, KL-constrained adaptation preserves performance across both original and new motion distributions.

The KL term is incorporated into the overall objective as a soft constraint:
\begin{equation}
L_{\text{total}} = L_{\text{RL}} + \lambda_p L_{\text{Parseval}} + \lambda_{k} L_{\text{KL}},
\end{equation}
where $\lambda_{k}$ controls the strength of the KL constraint.

\noindent \textbf{Stability Analysis of Residual Policy Adaptation}. To support the stability and effectiveness of the proposed fast adaptation framework, we provide a theoretical analysis of the residual policy adaptation scheme. In particular, we show that Parseval regularization and KL-constrained updates provide complementary constraints on the residual policy: Parseval regularization bounds its functional sensitivity via Lipschitz continuity, while the KL constraint limits its deviation from the base policy. Together, they ensure smooth and stable fast adaptation under limited data.
\begin{proposition}
\label{prop:Lipschitz}
Consider a residual policy $\pi_r:\mathbb{R}^d \rightarrow \mathbb{R}^m$ parameterized by an $L$-layer feed-forward MLP with 1-Lipschitz activation functions $\phi(\cdot)$.
Suppose Parseval regularization is applied such that for each layer $l=1,\ldots,L-1$, the weight matrix satisfies
$
\| W_l^\top W_l - s I \|_2 \le \varepsilon_l,
$
where $s>0$ is a scaling factor and $\varepsilon_l \ge 0$ denotes the approximation error.
Then the residual policy $\pi_r$ is Lipschitz continuous, with Lipschitz constant bounded as
$
\mathrm{Lip}(\pi_r)
\le \prod_{l=1}^{L-1} \sqrt{s + \varepsilon_l}\;\|W_L\|_2.
$
\end{proposition}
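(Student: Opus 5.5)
We write the residual policy as the composition $\pi_r = A_L \circ \phi \circ A_{L-1} \circ \cdots \circ \phi \circ A_1$, where $A_l(x) = W_l x + b_l$ is the affine map of layer $l$. The plan is to bound the Lipschitz constant of each factor separately and then multiply. Two standard facts will be used throughout:
\begin{itemize}
\item the Lipschitz constant of a composition is at most the product of the Lipschitz constants of its factors;
\item an affine map $x\mapsto W_l x + b_l$ has Lipschitz constant exactly $\|W_l\|_2$ in the Euclidean norm, since the bias cancels in differences.
\end{itemize}
Because each activation $\phi$ is $1$-Lipschitz by assumption (applied elementwise, so it is $1$-Lipschitz in $\ell_2$ as well), the activations contribute no factor. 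This yields
\[
\mathrm{Lip}(\pi_r) \le \prod_{l=1}^{L} \|W_l\|_2 .
\]
The last factor $\|W_L\|_2$ is left as is, matching the statement, since Parseval regularization is not applied to the output layer.

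The key step is to convert the Parseval condition $\|W_l^\top W_l - sI\|_2 \le \varepsilon_l$ into a spectral norm bound on $W_l$ for $l \le L-1$. Note that $W_l^\top W_l$ is symmetric positive semidefinite and that $\|W_l\|_2^2 = \lambda_{\max}(W_l^\top W_l) = \|W_l^\top W_l\|_2$. Writing $W_l^\top W_l = sI + E_l$ with $\|E_l\|_2 \le \varepsilon_l$, the triangle inequality (or Weyl's inequality on the eigenvalues) gives
\[
\|W_l\|_2^2 \le \|sI\|_2 + \|E_l\|_2 \le s + \varepsilon_l .
\]
Equivalently, for any unit vector $x$ we have $\|W_l x\|^2 = x^\top W_l^\top W_l x = s + x^\top E_l x \le s + \varepsilon_l$. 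Taking square roots gives $\|W_l\|_2 \le \sqrt{s+\varepsilon_l}$. Substituting this into the product for $l = 1,\ldots,L-1$ gives the claimed bound.

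No step here is a real obstacle; the argument is routine. The only points needing care are these:
\begin{itemize}
\item the hypothesis is stated in the operator norm, which is exactly what the eigenvalue argument needs (the Frobenius penalty used in training dominates it, so a small training loss implies the hypothesis);
\item the identity $\|W\|_2^2 = \|W^\top W\|_2$ should be justified by the singular value decomposition;
\item biases and elementwise activations must not inflate the constant, which holds since each coordinate difference shrinks under a $1$-Lipschitz scalar function.
\end{itemize}
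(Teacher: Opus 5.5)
Your proposal is correct and follows essentially the same route as the paper: bound $\mathrm{Lip}(\pi_r)$ by $\prod_{l=1}^{L}\|W_l\|_2$ using the composition rule, then use $\|W_l\|_2^2=\lambda_{\max}(W_l^\top W_l)\le s+\varepsilon_l$ for $l\le L-1$. Your extra remarks are valid points the paper leaves implicit: the Frobenius-norm Parseval penalty dominates the operator-norm hypothesis, and a scalar $1$-Lipschitz activation applied elementwise is $1$-Lipschitz in $\ell_2$.
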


\begin{proposition}
\label{prop:stability}

Let the adapted policy be defined as
$\pi(\cdot|s)=\mathcal{N}(\mu_b(s)+\mu_r(s),\,\Sigma)$, where $\mu_b(s)$ is the mean of the base policy ,$\mu_r(s)$ is the residual mean, and $\Sigma$ is a state-independent covariance matrix shared by both policies.
If the KL divergence satisfies
$D_{\text{KL}}(\pi(\cdot|s)\|\pi_b(\cdot|s))\leq\epsilon, \forall s$,
then the residual policy output is bounded as
$\|\mu_r(s)\|_2\leq\sqrt{2\epsilon\lambda_{max}(\Sigma)}$, where $\lambda_{\max}(\Sigma)$ denotes the largest eigenvalue of the covariance matrix.
\end{proposition}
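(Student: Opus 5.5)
The plan is to use the closed form of the KL divergence between two Gaussians that share a covariance, and then convert a bound on a Mahalanobis norm into a bound on the Euclidean norm through the spectrum of $\Sigma$. Fix a state $s$. Both $\pi(\cdot|s)=\mathcal{N}(\mu_b(s)+\mu_r(s),\Sigma)$ and $\pi_b(\cdot|s)=\mathcal{N}(\mu_b(s),\Sigma)$ have the same covariance. The general Gaussian KL formula
\[
D_{\mathrm{KL}}(\mathcal{N}(\mu_1,\Sigma_1)\|\mathcal{N}(\mu_2,\Sigma_2))=\tfrac12\bigl[\mathrm{tr}(\Sigma_2^{-1}\Sigma_1)-m+(\mu_2-\mu_1)^\top\Sigma_2^{-1}(\mu_2-\mu_1)+\log\tfrac{\det\Sigma_2}{\det\Sigma_1}\bigr]
\]
therefore simplifies: the trace term equals $m$ and the log-determinant term vanishes. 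This leaves
\[
D_{\mathrm{KL}}(\pi(\cdot|s)\|\pi_b(\cdot|s))=\tfrac12\,\mu_r(s)^\top\Sigma^{-1}\mu_r(s).
\]
I take $\Sigma$ to be symmetric positive definite, as is implicit in the Gaussian parameterization, and I would state this assumption explicitly.

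Next I lower-bound the quadratic form. By the Rayleigh quotient, every $v\in\mathbb{R}^m$ satisfies $v^\top\Sigma^{-1}v\ge\lambda_{\min}(\Sigma^{-1})\|v\|_2^2$. Since $\lambda_{\min}(\Sigma^{-1})=1/\lambda_{\max}(\Sigma)$, this gives
\[
\frac{\|\mu_r(s)\|_2^2}{2\lambda_{\max}(\Sigma)}\le D_{\mathrm{KL}}(\pi(\cdot|s)\|\pi_b(\cdot|s))\le\epsilon.
\]
Rearranging yields $\|\mu_r(s)\|_2^2\le 2\epsilon\,\lambda_{\max}(\Sigma)$, and taking square roots gives the stated bound. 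Because $s$ was arbitrary and the hypothesis holds for all $s$, the bound holds uniformly in $s$.

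The argument has no real obstacle; the only step needing care is applying the spectral inequality to $\Sigma^{-1}$ rather than $\Sigma$. In particular, $\lambda_{\min}(\Sigma^{-1})$ must be identified as $1/\lambda_{\max}(\Sigma)$, which follows from the eigendecomposition $\Sigma=U\Lambda U^\top$, giving $\Sigma^{-1}=U\Lambda^{-1}U^\top$. I would also remark that the shared, state-independent covariance is essential: if the covariances differed, the trace and log-determinant terms would not cancel, although they are nonnegative in total, so the bound would still hold. Noting this makes the result robust to that modeling choice.
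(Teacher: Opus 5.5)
Your proof is correct and follows the paper's argument essentially step for step: the shared-covariance Gaussian KL reduces to $\tfrac12\,\mu_r(s)^\top\Sigma^{-1}\mu_r(s)$, and the Rayleigh bound with $\lambda_{\min}(\Sigma^{-1})=1/\lambda_{\max}(\Sigma)$ then gives the claim. Your closing aside on unequal covariances is also sound, provided the eigenvalue is taken from the covariance of $\pi_b$ (the second argument of the KL), since that is the matrix appearing in the Mahalanobis term.
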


\noindent\textit{Proof.} Full proof details of Proposition~\ref{prop:Lipschitz} and Proposition~\ref{prop:stability} are provided in Appendix \ref{prof:prop}.

Taken together, Proposition~\ref{prop:Lipschitz} and Proposition~\ref{prop:stability} provide complementary guarantees for residual policy adaptation.
Parseval regularization bounds the sensitivity of the residual policy with respect to state perturbations (functional smoothness).
KL constraint bounds the magnitude of the residual policy relative to the base policy (distributional proximity).

As a result, the adapted policy remains \textbf{\textit{locally smooth}} and \textbf{\textit{statistically stable}}, enabling efficient adaptation to out-of-distribution motions while preserving prior knowledge.

\section{Experiments}

In the experimental section, we aim to answer the following key research questions:

\begin{itemize}

\item \textbf{Q1}: How does FAST compare with state-of-the-art humanoid whole-body controllers?

\item \textbf{Q2}: Can FAST adapt efficiently to new motion distributions while retaining prior capabilities? 

\item \textbf{Q3}: Does CoM-Aware Control improve balance and stability in high-dynamic motions? 

\item \textbf{Q4}: How effectively does FAST transfer to real-world whole-body control from diverse motion sources?

\end{itemize}

\subsection{Experiment Setup} \label{exp:dataset}

\noindent \textbf{Baselines}. To evaluate the effectiveness of our approach, we compare against two state-of-the-art universal humanoid motion tracker, GMT \cite{chen2025gmt} and TWIST2 \cite{ze2025twist2}. We employ the officially released models for both baselines and conduct all evaluations in the MuJoCo simulator, which is supported by all methods. 

\noindent \textbf{Metrics}. We use a set of metrics \cite{wang2025experts, zhang2025hub} to comprehensively evaluate the performance of motion tracking. (1) \textbf{Success Rate} ($Succ$, \%): in all experiments, a task is considered a failure when the root height of the humanoid deviates from the reference motion by more than 0.25 m. Additionally, for the objectives of different experiments, we design and apply stricter task failure conditions to better reflect the specific requirements of each task (see Appendix \ref{evaluation_failure}). (2) \textbf{Global Mean Per Keypoint Position Error} ($E_{\mathrm{mpkpe}}$, $m$) measures global keypoint position tracking accuracy. (3) \textbf{Mean Per Joint Position Error} ($E_{\mathrm{mpjpe}}$, $rad$) measures local joint position tracking performance. (4) \textbf{Global Mean Root Linear Velocity Error} ($E_{\mathrm{vel}}$, $m$) measures global root linear velocity tracking accuracy. To evaluate the policy’s performance on high-dynamic balance motions, we further include (5) \textbf{Slippage} ($Slip$, $m/s$), measuring the velocity of the support foot relative to the ground, with higher values indicating greater instability, and (6) \textbf{Mean Center-of-Mass and Center-of-Pressure Distance} ($E_{\mathrm{mpd}}$, $m$), which measures the absolute difference between the robot's CoM and CoP, reflecting the current balance state of the humanoid.

\subsection{Comparison with Existing Methods}

To address \textbf{Q1}, Table~\ref{tab:main-result} compares our base model with GMT and TWIST2 on both the training dataset (AMASS) and an out-of-distribution (OOD) dataset (a leg-dominant subset of MotionX \cite{lin2023motion}). 

\begin{table*}[h]
\centering
\caption{Main results on training dataset (AMASS) and test dataset (MotionX). Results are reported as mean $\pm$ one standard deviation.}
\resizebox{\textwidth}{!}{
\begin{tabular}{lcccc @{\hskip 2em} cccc}
\toprule
& \multicolumn{4}{c}{\textbf{AMASS (training dataset)}} 
& \multicolumn{4}{c}{\textbf{MotionX (test dataset)}} \\
\cmidrule(r){1-5} \cmidrule(r){6-9}
\textbf{Method} 
& $Succ \uparrow$ 
& $E_{\mathrm{mpjpe}} \downarrow$ 
& $E_{\mathrm{mpkpe}} \downarrow$ 
& $E_{\mathrm{vel}} \downarrow$ 
& $Succ \uparrow$ 
& $E_{\mathrm{mpjpe}} \downarrow$ 
& $E_{\mathrm{mpkpe}} \downarrow$ 
& $E_{\mathrm{vel}} \downarrow$ 
\\
\midrule
GMT &
{77.55}  & 
{0.100}\textsubscript{$\pm$0.020}  & 
{0.319}\textsubscript{$\pm$0.257}  & 
{0.226}\textsubscript{$\pm$0.162}  & 
{38.91}  & 
{0.124}\textsubscript{$\pm$0.033}  & 
{0.232}\textsubscript{$\pm$0.173}  & 
\textbf{0.273}\textsubscript{$\pm$0.157}  
\\

TWIST2 &
{72.61}  & 
\textbf{0.084}\textsubscript{$\pm$0.022}  & 
{0.388}\textsubscript{$\pm$0.258}  & 
{0.269}\textsubscript{$\pm$0.163}  & 
{31.82}  & 
\textbf{0.102}\textsubscript{$\pm$0.031}  & 
{0.229}\textsubscript{$\pm$0.182}  & 
{0.311}\textsubscript{$\pm$0.191}  
\\

\textbf{\method{} (ours)}    & 
\textbf{87.14}  & 
{0.132}\textsubscript{$\pm$0.040}  & 
\textbf{0.123}\textsubscript{$\pm$0.079}  & 
\textbf{0.206}\textsubscript{$\pm$0.157}  & 
\textbf{51.46}  & 
{0.143}\textsubscript{$\pm$0.035}  & 
\textbf{0.212}\textsubscript{$\pm$0.136}  & 
{0.359}\textsubscript{$\pm$0.198}   
\\

\bottomrule
\end{tabular}
}
\label{tab:main-result}
\end{table*}

Across both datasets, FAST consistently achieves the highest success rate, highlighting improved long-horizon robustness and generalization. 
Although not always optimal in local joint-angle errors, FAST substantially improves the tracking accuracy of global keypoints, which is more directly correlated with physically consistent and stable motions. This reflects a trade-off between optimizing local accuracy and maintaining globally coherent, long-horizon behavior.

On the OOD dataset, baseline methods report lower MPJPE and velocity errors. However, these metrics are computed over significantly shorter successful segments, as frequent early terminations occur during highly dynamic motions. As a result, per-frame error metrics alone can be misleading when evaluating long-horizon tracking performance.

\subsection{Evaluation of Fast Adaptation with Performance Preservation}

To address \textbf{Q2}, we evaluate the fast adaptation capability of FAST on two target datasets, LaFan1 and MotionX, while assessing performance preservation on the source training dataset (AMASS). We compare against several adaptation strategies, including a frozen base model, training from scratch, full fine-tuning, and a residual policy without regularization. For a fair comparison, all methods utilize the same training setup and optimization steps. All simulation evaluations are conducted in the IsaacLab simulator. Quantitative results on LaFan1 and MotionX, alongside performance preservation on AMASS, are visualized in Figure~\ref{fig:adaptation_radar}.

\begin{figure}[h]
	\centering
        \includegraphics[width=0.49\linewidth]{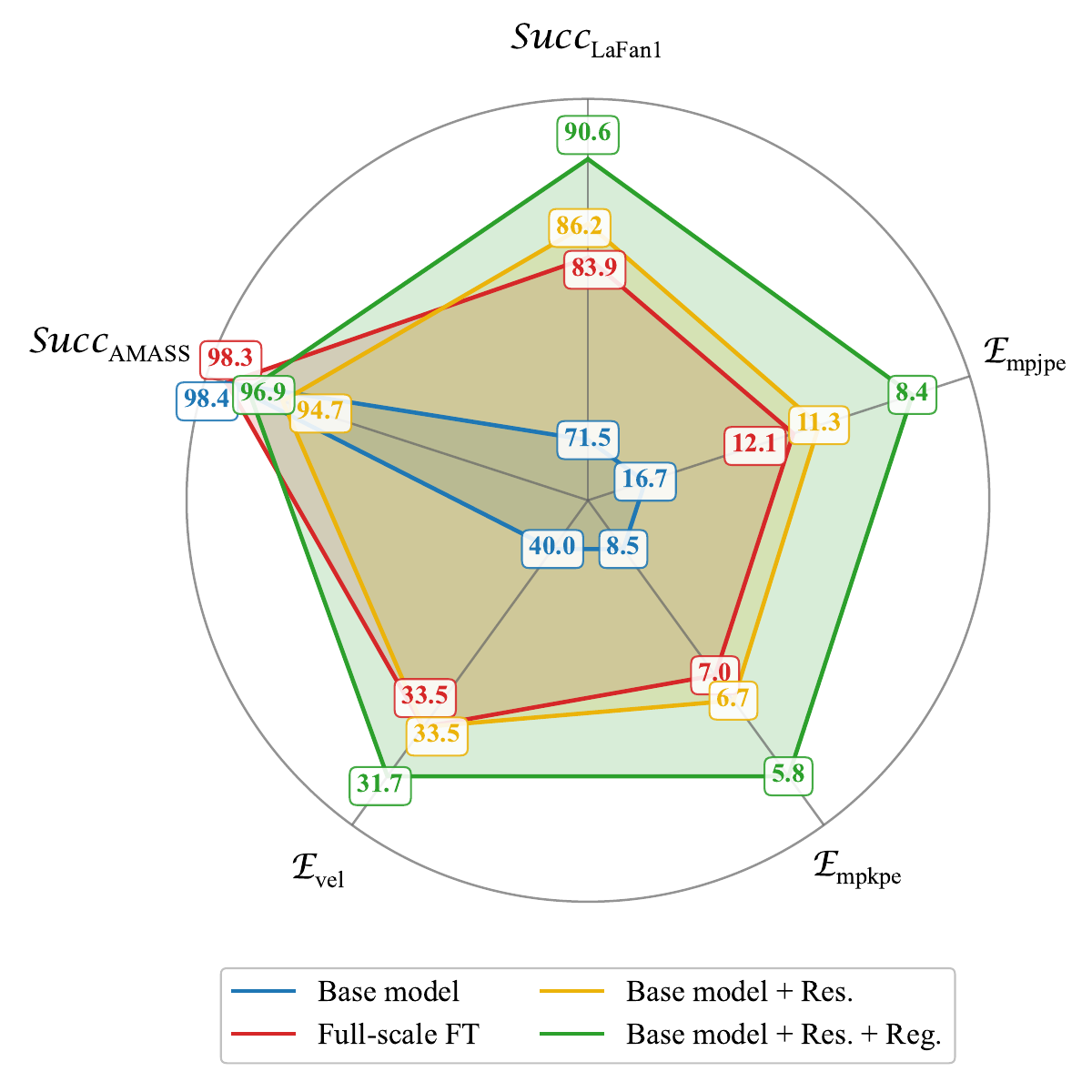}
        \includegraphics[width=0.49\linewidth]{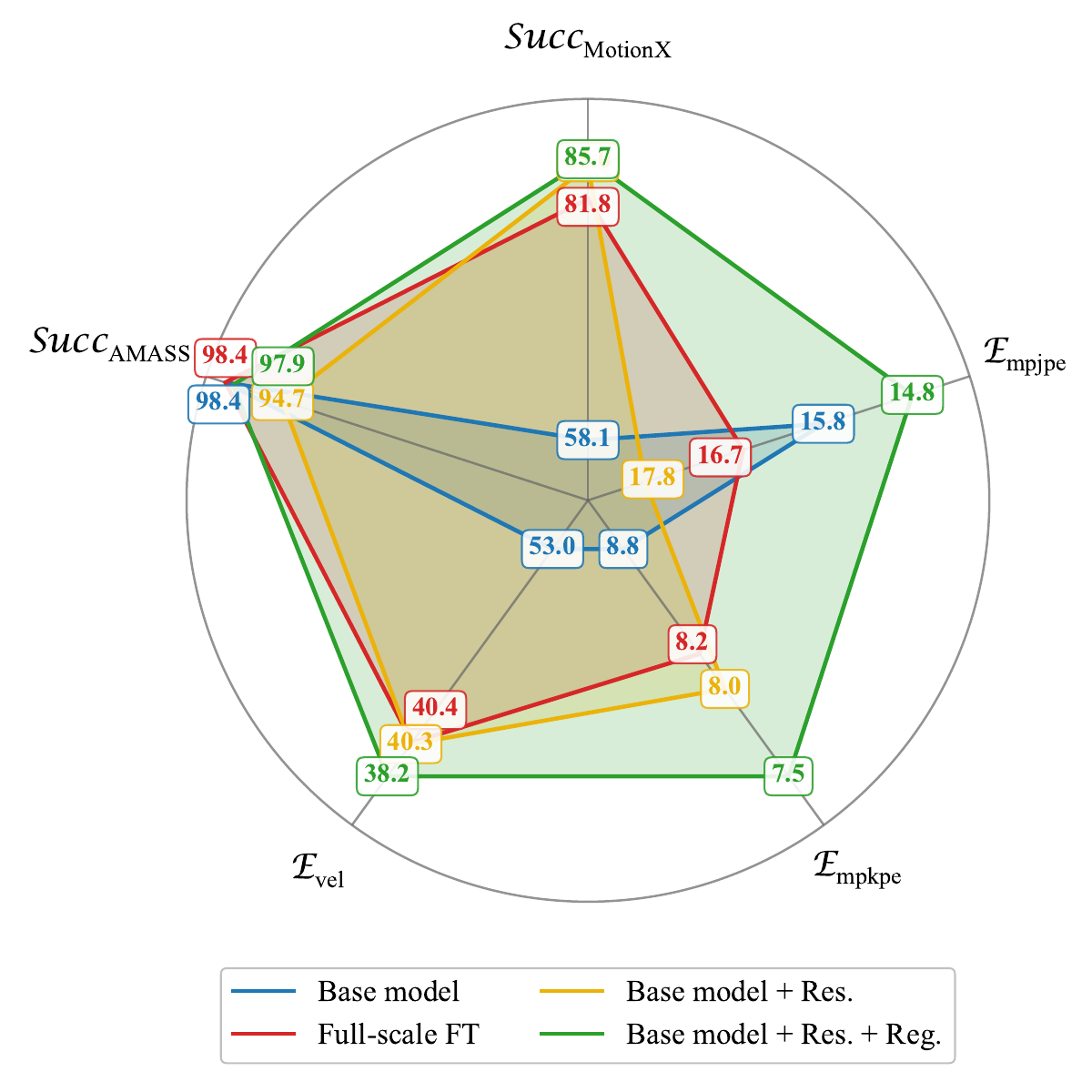}
\caption{Fast adaptation on LaFan1 and MotionX (target) with performance retention on AMASS (source).}
 \label{fig:adaptation_radar}
\end{figure}

On the target adaptation datasets, FAST consistently achieves the best overall performance across tracking accuracy and task completion metrics. Compared to training from scratch and full fine-tuning, FAST converges more rapidly and reaches higher success rates, demonstrating its effectiveness for fast adaptation to new motion distributions. Although the unregularized residual model also improves target-domain performance, it consistently underperforms FAST, indicating that structured regularization plays an important role in guiding stable and effective adaptation.

Importantly, FAST better preserves performance on the source dataset. As shown in Figure~\ref{fig:adaptation_radar}, the residual-only model exhibits a clear performance degradation on AMASS, reflecting substantial distribution drift caused by the lack of source-domain constraints. In contrast, FAST significantly alleviates this degradation, achieving the lowest MPJPE and MPKPE among all adapted models while maintaining competitive success rates. Although its success rate is slightly lower than that of the frozen base model and full fine-tuning, FAST provides a more favorable balance between adaptation to new data and robustness on the original distribution.

\subsection{Ablation Studies}
\subsubsection{Ablations on CoM-Aware Control}

To address \textbf{Q3}, we evaluate the impact of CoM-Aware Control on a curated set of high-dynamic motions, where maintaining balance is a primary source of failure. Table~\ref{tab:com} reports quantitative results on both in-distribution and out-of-distribution (OOD) subsets, while Figure~\ref{fig:com} presents representative qualitative examples.

\begin{table*}[h]
\centering
\setlength{\tabcolsep}{6pt}
\caption{Quantitative ablation results on CoM-Aware Control for high-dynamic motions.}
\label{tab:com}
\resizebox{\linewidth}{!}{%
\begin{tabular}{lccc @{\hskip 2em} ccc}
\toprule

& \multicolumn{3}{c}{\textbf{In-distribution dataset}} 
& \multicolumn{3}{c}{\textbf{Out-of-distribution dataset}} \\

\cmidrule(r){1-4} \cmidrule(r){5-7}

\multirow{2}{*}{\textbf{Method}} & \multicolumn{2}{c}{Stability} & \multicolumn{1}{c}{Completion} & \multicolumn{2}{c}{Stability} & \multicolumn{1}{c}{Completion} \\
\cmidrule(lr){2-3} \cmidrule(lr){4-4} \cmidrule(lr){5-6} \cmidrule(lr){7-7}
& $E_{\mathrm{mpd}}$ $\downarrow$ & $Slip$ $\downarrow$ & $Succ$ $\uparrow$ & $E_{\mathrm{mpd}}$ $\downarrow$ & $Slip$ $\downarrow$ & $Succ$ $\uparrow$ \\
\midrule
\method{} w.o. CoM       & 0.171$\pm$0.075    & 1.472$\pm$0.848                        & 84.15              & 0.122$\pm$0.052    & 1.298$\pm$0.752                        & 79.07                \\
\textbf{\method{} (Ours)} & \textbf{0.156$\pm$0.079} & \textbf{0.909$\pm$0.483}               & \textbf{97.56}        & \textbf{0.110$\pm$0.040} & \textbf{0.995$\pm$0.787}               & \textbf{86.63}            \\
\bottomrule
\end{tabular}
}
\end{table*}

As shown in Table~\ref{tab:com}, incorporating explicit CoM awareness consistently improves stability and task completion across both motion distributions. Compared to the variant without CoM-related objectives, the full model achieves lower mean positional deviation ($E_{\mathrm{mpd}}$) and reduced $Slip$, indicating more stable contact and improved balance maintenance. 

\begin{figure*}[h]
	\centering
        \includegraphics[width=\linewidth]{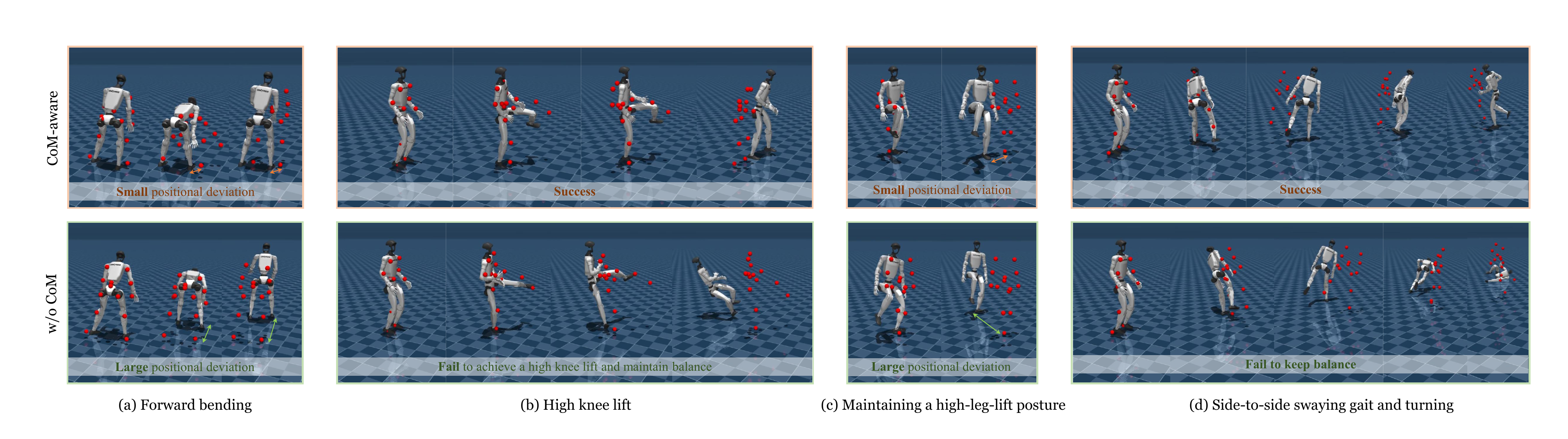}
\caption{Visualizations of representative motions comparing CoM-Aware Control with the baseline.
(a) Forward bending.
(b) High knee lift.
(c) High leg lift.
(d) Side-to-side swaying gait with turning.
CoM-Aware Control consistently maintains balance and reduces instability, while the baseline exhibits drift or falls.}
 \label{fig:com}
\end{figure*}

Figure~\ref{fig:com} further illustrates the benefits of CoM-Aware Control in representative high-dynamic scenarios. By explicitly accounting for the CoM, the controller suppresses excessive root drift and mitigates balance loss during large-amplitude motions, enabling successful execution where the baseline fails.

Overall, these results demonstrate that explicit consideration of CoM-related physical signals significantly enhances balance robustness in high-dynamic motion tracking and improves generalization to unseen motion distributions.

\subsubsection{Ablations on Fast Adaptation}

We conduct ablation studies by selectively removing Parseval and KL regularization from FAST, resulting in three variants: FAST w.o. Parseval \& KL, FAST w.o. KL, and FAST w.o. Parseval.

As shown in Table~\ref{tab:fast_adaptation}, on the target adaptation dataset, the FAST consistently achieves the best performance across all metrics. This indicates that Parseval and KL regularization provide complementary benefits for stable and efficient adaptation. Variants using only a single regularization term improve over the unregularized baseline but remain inferior to the full model.

\begin{table*}[t]
\centering
\caption{Abaltions on fast adaptation results on target adaptation dataset (LaFan1) and source dataset (AMASS). 
}
\resizebox{\textwidth}{!}{
\begin{tabular}{lcccc @{\hskip 0.8em} cccc}
\toprule
& \multicolumn{4}{c}{\textbf{LaFan1 (target adaptation dataset)}} 
& \multicolumn{4}{c}{\textbf{AMASS (source dataset)}} \\
\cmidrule(r){1-5} \cmidrule(r){6-9}
\textbf{Method} 
& $Succ \uparrow$ 
& $E_{\mathrm{mpjpe}} \downarrow$ 
& $E_{\mathrm{mpkpe}} \downarrow$ 
& $E_{\mathrm{vel}} \downarrow$ 
& $Succ \uparrow$ 
& $E_{\mathrm{mpjpe}} \downarrow$ 
& $E_{\mathrm{mpkpe}} \downarrow$ 
& $E_{\mathrm{vel}} \downarrow$ 
\\
\midrule
\method{} w.o. Parseval \& KL &
{86.19}  & 
{0.113}\textsubscript{$\pm$0.055}  & 
{0.067}\textsubscript{$\pm$0.055}  & 
{0.335}\textsubscript{$\pm$0.136}  & 
{94.67}  & 
{0.133}\textsubscript{$\pm$0.031}  & 
{0.050}\textsubscript{$\pm$0.026}  & 
{0.314}\textsubscript{$\pm$0.130}   
\\

\method{} w.o. KL &
{88.40}  & 
{0.090}\textsubscript{$\pm$0.052}  & 
{0.062}\textsubscript{$\pm$0.045}  & 
{0.318}\textsubscript{$\pm$0.121}  & 
{96.81}  & 
{0.099}\textsubscript{$\pm$0.027}  & 
{0.048}\textsubscript{$\pm$0.027}  & 
{0.297}\textsubscript{$\pm$0.130}  
\\

\method{} w.o. Parseval &
{87.82}  & 
{0.091}\textsubscript{$\pm$0.055}  & 
{0.064}\textsubscript{$\pm$0.068}  & 
{0.323}\textsubscript{$\pm$0.124}  & 
\textbf{97.36}  & 
{0.101}\textsubscript{$\pm$0.029}  & 
{0.048}\textsubscript{$\pm$0.027}  & 
\textbf{0.295}\textsubscript{$\pm$0.129}  
\\

\textbf{\method{} (ours)}    & 
\textbf{90.60}  & 
\textbf{0.084}\textsubscript{$\pm$0.050}  & 
\textbf{0.058}\textsubscript{$\pm$0.041}  & 
\textbf{0.317}\textsubscript{$\pm$0.123}  & 
{96.92}  & 
\textbf{0.092}\textsubscript{$\pm$0.028}  & 
\textbf{0.047}\textsubscript{$\pm$0.028}  & 
{0.306}\textsubscript{$\pm$0.137}   
\\

\bottomrule
\end{tabular}
}
\label{tab:fast_adaptation}
\end{table*}

On the source dataset, FAST and variant with KL regularization consistently achieve higher success rates than those without KL, indicating that KL regularization effectively helps preserve previously learned behaviors by constraining policy deviation from the base model. In particular, the KL-only variant attains the highest success rate, while FAST achieves the second-highest success rate together with the lowest MPJPE and MPKPE, reflecting a favorable balance between preservation and adaptation.

Overall, these results demonstrate that combining Parseval and KL regularization enables fast adaptation with a favorable balance between target-domain performance and source-domain preservation. 

\subsection{Real-world Deployment}
To answer \textbf{Q4}, we deploy FAST on a Unitree G1 humanoid robot \cite{unitree-g1} to evaluate its real-world performance across diverse motion sources and control scenarios, as shown in Figure 1. High-dynamic motion tracking and whole-body teleoperation are performed in a zero-shot manner using the pretrained general policy, demonstrating strong robustness and balance beyond simulation. For more challenging scenarios with low-quality motion references, we further leverage FAST’s lightweight residual adaptation to enable rapid and stable specialization.

\subsubsection{High-dynamic Motion Tracking}
We first evaluate FAST on high-dynamic single-motion tracking tasks using offline motion capture data. Without adaptation, the robot successfully executes challenging motions such as single-leg squats and side kicks~\cite{xie2025kungfubot}, which require precise balance control and rapid weight shifts. FAST maintains stable CoM behavior and accurately follows the reference motions, demonstrating strong zero-shot robustness to high-dynamic motions.

\subsubsection{Teleoperation}

We further assess FAST in real-time whole-body teleoperation scenarios, where human motions are streamed online as reference trajectories. The robot performs diverse behaviors, including walking, jumping, martial arts movements, and object manipulation (e.g., grabbing toys, and moving chairs). Despite the variability and latency inherent in teleoperation, FAST remains responsive and stable, illustrating that the pretrained general policy provides a strong foundation for whole-body teleoperation without additional adaptation.


\subsubsection{Adaptation to Low-Quality Motions}
\begin{wrapfigure}{r}{0.4\textwidth} 

    \centering
    \vspace{-15pt} 

    \includegraphics[width=\linewidth]{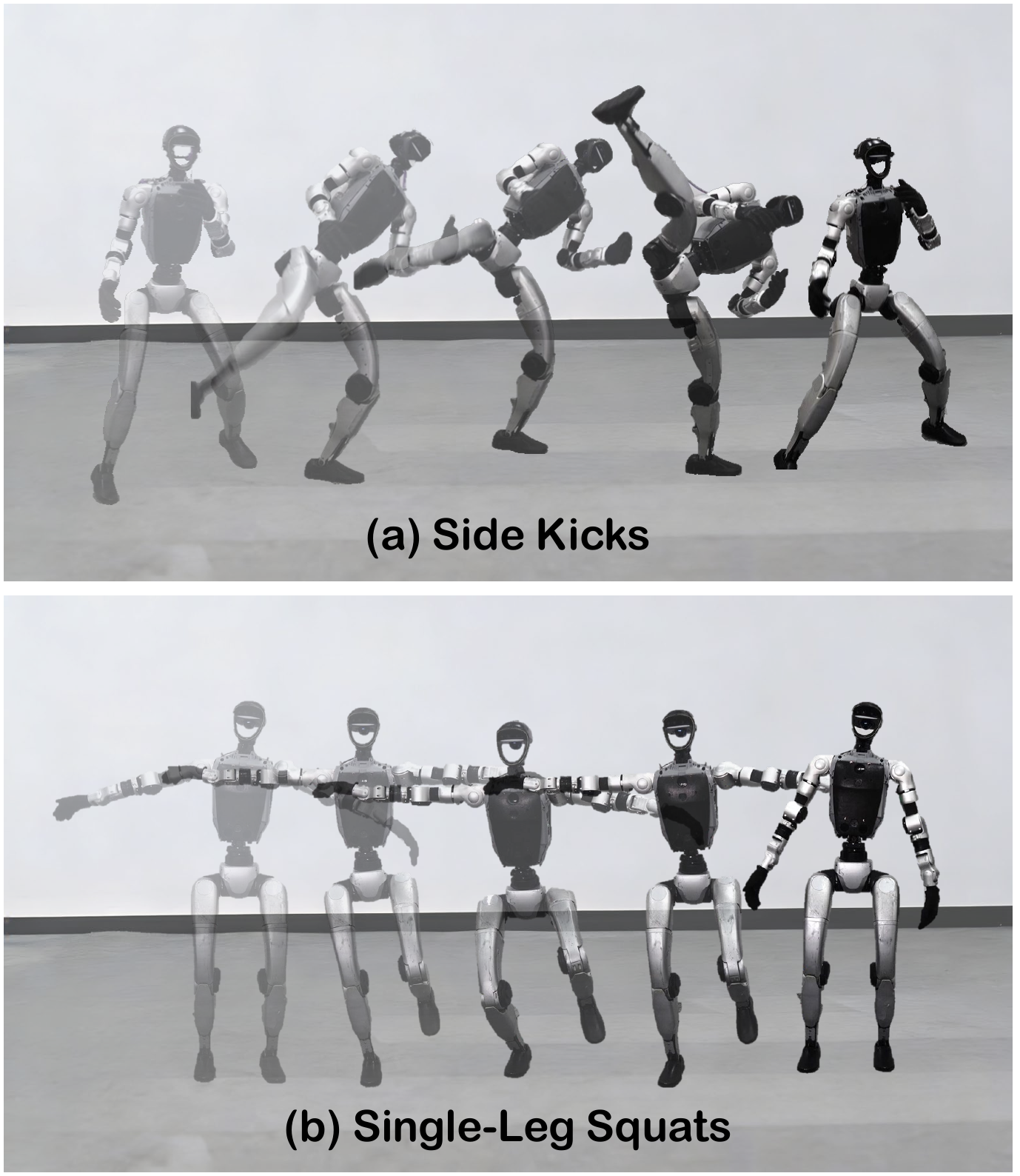}
    
    \caption{Zero-shot generalization to high-dynamic motions on a Unitree G1.}
    \label{fig:real_exp_high_dynamic}
    
    \vspace{-3em}
\end{wrapfigure}
While the pretrained base policy demonstrates strong zero-shot robustness, motion references derived from text generation or monocular video estimation often contain significant kinematic noise, temporal inconsistencies, and physical implausibilities that exceed the coverage of the training distribution. To handle such challenging scenarios, we leverage FAST’s residual adaptation mechanism to quickly specialize the base policy to these low quality motion sources.
 
We integrate a text-to-motion generation pipeline~\cite{yue2025rl} and a video-to-motion estimation pipeline~\cite{shen2024world,ze2025gmr} to produce reference motions. After fast adaptation, the robot robustly tracks synthesized motions, including single-leg jumping, 360-degree hopping, and complex dance sequences reconstructed from monocular videos. These results demonstrate that FAST enables efficient and stable adaptation to low-quality motions.


\section{Conclusion}

In this work, we present FAST, a general humanoid whole-body control framework that emphasizes fast adaptation and stable motion tracking. FAST combines Parseval-Guided Residual Policy Adaptation to efficiently specialize a pretrained policy to new motion distributions while constraining policy drift, with Center-of-Mass-Aware Control to enhance stability when tracking high-dynamic or low-quality reference motions. Extensive simulation and real-world experiments demonstrate that FAST consistently outperforms state-of-the-art methods in robustness, adaptability, and generalization across diverse scenarios, including motion capture tracking, text- and video-derived motions, and real-time teleoperation.
Future work will explore fully online and continual adaptation, as well as more adaptive regularization strategies to further balance stability and flexibility.

\clearpage

\bibliographystyle{unsrt}
\bibliography{ref}

@inproceedings{wang2025experts,
  title     = {From Experts to a Generalist: Toward General Whole-Body Control for Humanoid Robots},
  author    = {Yuxuan Wang and Ming Yang and Ziluo Ding and Yu Zhang and Weishuai Zeng and Xinrun Xu and Haobin Jiang and Zongqing Lu},
  booktitle = {The Thirty-ninth Annual Conference on Neural Information Processing Systems},
  year      = {2025}
}

@article{wang2025crosser,
  title={CROSSER: Learning Generalizable Humanoid Locomotion Through Inverse Dynamics-Guided Cross-Simulator Adaptation},
  author={Wang, Zepeng and Shi, Xiaochuan and Ding, Ziluo and Wang, Yuxuan and Sun, Zhenguo and Ma, Chao and Lu, Zongqing},
  journal={IEEE Robotics and Automation Letters},
  volume={10},
  number={12},
  pages={13336--13343},
  year={2025},
  publisher={IEEE}
}

@article{liao2025beyondmimic,
  title={Beyondmimic: From motion tracking to versatile humanoid control via guided diffusion},
  author={Liao, Qiayuan and Truong, Takara E and Huang, Xiaoyu and Tevet, Guy and Sreenath, Koushil and Liu, C Karen},
  journal={arXiv preprint arXiv:2508.08241},
  year={2025}
}

@article{chung2024parseval,
  title={Parseval regularization for continual reinforcement learning},
  author={Chung, Wesley and Cherif, Lynn and Meger, David and Precup, Doina},
  journal={Advances in Neural Information Processing Systems},
  volume={37},
  pages={127937--127967},
  year={2024}
}

@article{chen2025gmt,
  title={GMT: General Motion Tracking for Humanoid Whole-Body Control},
  author={Chen, Zixuan and Ji, Mazeyu and Cheng, Xuxin and Peng, Xuanbin and Peng, Xue Bin and Wang, Xiaolong},
  journal={arXiv preprint arXiv:2506.14770},
  year={2025}
}

@article{ze2025twist2,
  title={Twist2: Scalable, portable, and holistic humanoid data collection system},
  author={Ze, Yanjie and Zhao, Siheng and Wang, Weizhuo and Kanazawa, Angjoo and Duan, Rocky and Abbeel, Pieter and Shi, Guanya and Wu, Jiajun and Liu, C Karen},
  journal={arXiv preprint arXiv:2511.02832},
  year={2025}
}

@article{ze2025twist,
  title={Twist: Teleoperated whole-body imitation system},
  author={Ze, Yanjie and Chen, Zixuan and Ara{\'u}jo, Joao Pedro and Cao, Zi-ang and Peng, Xue Bin and Wu, Jiajun and Liu, C Karen},
  journal={arXiv preprint arXiv:2505.02833},
  year={2025}
}

@article{wang2025beamdojo,
  title={Beamdojo: Learning agile humanoid locomotion on sparse footholds},
  author={Wang, Huayi and Wang, Zirui and Ren, Junli and Ben, Qingwei and Huang, Tao and Zhang, Weinan and Pang, Jiangmiao},
  journal={arXiv preprint arXiv:2502.10363},
  year={2025}
}

@article{xie2025kungfubot,
  title={KungfuBot: Physics-Based Humanoid Whole-Body Control for Learning Highly-Dynamic Skills},
  author={Xie, Weiji and Han, Jinrui and Zheng, Jiakun and Li, Huanyu and Liu, Xinzhe and Shi, Jiyuan and Zhang, Weinan and Bai, Chenjia and Li, Xuelong},
  journal={arXiv preprint arXiv:2506.12851},
  year={2025}
}

@article{huang2025learning,
  title={Learning humanoid standing-up control across diverse postures},
  author={Huang, Tao and Ren, Junli and Wang, Huayi and Wang, Zirui and Ben, Qingwei and Wen, Muning and Chen, Xiao and Li, Jianan and Pang, Jiangmiao},
  journal={arXiv preprint arXiv:2502.08378},
  year={2025}
}

@article{ding2025jaeger,
  title   = {JAEGER: Dual-Level Humanoid Whole-Body Controller},
  author  = {Ding, Ziluo and Jiang, Haobin and Wang, Yuxuan and Sun, Zhenguo and Zhang, Yu and Niu, Xiaojie and Yang, Ming and Zeng, Weishuai and Xu, Xinrun and Lu, Zongqing},
  journal = {arXiv preprint arXiv:2505.06584},
  year    = {2025}
}

@article{ji2024exbody2,
  title={ExBody2: Advanced Expressive Humanoid Whole-Body Control}, 
  author={Ji, Mazeyu and Peng, Xuanbin and Liu, Fangchen and Li, Jialong and Yang, Ge and Cheng, Xuxin and Wang, Xiaolong},
  journal={arXiv preprint arXiv:2412.13196},
  year={2024},
}

@article{pan2025agility,
  title={Agility Meets Stability: Versatile Humanoid Control with Heterogeneous Data},
  author={Pan, Yixuan and Qiao, Ruoyi and Chen, Li and Chitta, Kashyap and Pan, Liang and Mai, Haoguang and Bu, Qingwen and Zhao, Hao and Zheng, Cunyuan and Luo, Ping and others},
  journal={arXiv preprint arXiv:2511.17373},
  year={2025}
}

@article{han2025kungfubot2,
  title={KungfuBot2: Learning Versatile Motion Skills for Humanoid Whole-Body Control},
  author={Han, Jinrui and Xie, Weiji and Zheng, Jiakun and Shi, Jiyuan and Zhang, Weinan and Xiao, Ting and Bai, Chenjia},
  journal={arXiv preprint arXiv:2509.16638},
  year={2025}
}

@article{luo2025sonic,
  title={Sonic: Supersizing motion tracking for natural humanoid whole-body control},
  author={Luo, Zhengyi and Yuan, Ye and Wang, Tingwu and Li, Chenran and Chen, Sirui and Casta{\~n}eda, Fernando and Cao, Zi-Ang and Li, Jiefeng and Minor, David and Ben, Qingwei and others},
  journal={arXiv preprint arXiv:2511.07820},
  year={2025}
}

@inproceedings{johannink2019residual,
  title={Residual reinforcement learning for robot control},
  author={Johannink, Tobias and Bahl, Shikhar and Nair, Ashvin and Luo, Jianlan and Kumar, Avinash and Loskyll, Matthias and Ojea, Juan Aparicio and Solowjow, Eugen and Levine, Sergey},
  booktitle={2019 international conference on robotics and automation (ICRA)},
  pages={6023--6029},
  year={2019},
  organization={IEEE}
}

@article{zhao2025resmimic,
  title={Resmimic: From general motion tracking to humanoid whole-body loco-manipulation via residual learning},
  author={Zhao, Siheng and Ze, Yanjie and Wang, Yue and Liu, C Karen and Abbeel, Pieter and Shi, Guanya and Duan, Rocky},
  journal={arXiv preprint arXiv:2510.05070},
  year={2025}
}

@article{he2025asap,
  title={Asap: Aligning simulation and real-world physics for learning agile humanoid whole-body skills},
  author={He, Tairan and Gao, Jiawei and Xiao, Wenli and Zhang, Yuanhang and Wang, Zi and Wang, Jiashun and Luo, Zhengyi and He, Guanqi and Sobanbab, Nikhil and Pan, Chaoyi and others},
  journal={arXiv preprint arXiv:2502.01143},
  year={2025}
}

@article{peng2022ase,
  title={Ase: Large-scale reusable adversarial skill embeddings for physically simulated characters},
  author={Peng, Xue Bin and Guo, Yunrong and Halper, Lina and Levine, Sergey and Fidler, Sanja},
  journal={ACM Transactions On Graphics (TOG)},
  volume={41},
  number={4},
  pages={1--17},
  year={2022},
  publisher={ACM New York, NY, USA}
}

@inproceedings{tessler2023calm,
  title={Calm: Conditional adversarial latent models for directable virtual characters},
  author={Tessler, Chen and Kasten, Yoni and Guo, Yunrong and Mannor, Shie and Chechik, Gal and Peng, Xue Bin},
  booktitle={ACM SIGGRAPH 2023 Conference Proceedings},
  pages={1--9},
  year={2023}
}

@inproceedings{pertsch2021accelerating,
  title={Accelerating reinforcement learning with learned skill priors},
  author={Pertsch, Karl and Lee, Youngwoon and Lim, Joseph},
  booktitle={Conference on robot learning},
  pages={188--204},
  year={2021},
  organization={PMLR}
}

@inproceedings{mees2024octo,
  title={Octo: An open-source generalist robot policy},
  author={Mees, Oier and Ghosh, Dibya and Pertsch, Karl and Black, Kevin and Walke, Homer Rich and Dasari, Sudeep and Hejna, Joey and Kreiman, Tobias and Xu, Charles and Luo, Jianlan and others},
  booktitle={First Workshop on Vision-Language Models for Navigation and Manipulation at ICRA 2024},
  year={2024}
}

@inproceedings{luo2023perpetual,
  title={Perpetual humanoid control for real-time simulated avatars},
  author={Luo, Zhengyi and Cao, Jinkun and Kitani, Kris and Xu, Weipeng and others},
  booktitle={Proceedings of the IEEE/CVF International Conference on Computer Vision},
  pages={10895--10904},
  year={2023}
}

@article{yin2025unitracker,
  title={Unitracker: Learning universal whole-body motion tracker for humanoid robots},
  author={Yin, Kangning and Zeng, Weishuai and Fan, Ke and Dai, Minyue and Wang, Zirui and Zhang, Qiang and Tian, Zheng and Wang, Jingbo and Pang, Jiangmiao and Zhang, Weinan},
  journal={arXiv preprint arXiv:2507.07356},
  year={2025}
}

@inproceedings{zhang2025hub,
  title={HuB: Learning Extreme Humanoid Balance},
  author={Zhang, Tong and Zheng, Boyuan and Nai, Ruiqian and Hu, Yingdong and Wang, Yen-Jen and Chen, Geng and Lin, Fanqi and Li, Jiongye and Hong, Chuye and Sreenath, Koushil and others},
  booktitle={RSS 2025 Workshop on Whole-body Control and Bimanual Manipulation: Applications in Humanoids and Beyond}
}

@inproceedings{mahmood2019amass,
  title={AMASS: Archive of motion capture as surface shapes},
  author={Mahmood, Naureen and Ghorbani, Nima and Troje, Nikolaus F and Pons-Moll, Gerard and Black, Michael J},
  booktitle={Proceedings of the IEEE/CVF international conference on computer vision},
  pages={5442--5451},
  year={2019}
}

@article{li2023object,
  title={Object motion guided human motion synthesis},
  author={Li, Jiaman and Wu, Jiajun and Liu, C Karen},
  journal={ACM Transactions on Graphics (TOG)},
  volume={42},
  number={6},
  pages={1--11},
  year={2023},
  publisher={ACM New York, NY, USA}
}

@article{lin2023motion,
  title={Motion-x: A large-scale 3d expressive whole-body human motion dataset},
  author={Lin, Jing and Zeng, Ailing and Lu, Shunlin and Cai, Yuanhao and Zhang, Ruimao and Wang, Haoqian and Zhang, Lei},
  journal={Advances in Neural Information Processing Systems},
  volume={36},
  pages={25268--25280},
  year={2023}
}

@article{zhang2025motion,
  title={Motion-x++: A large-scale multimodal 3d whole-body human motion dataset},
  author={Zhang, Yuhong and Lin, Jing and Zeng, Ailing and Wu, Guanlin and Lu, Shunlin and Fu, Yurong and Cai, Yuanhao and Zhang, Ruimao and Wang, Haoqian and Zhang, Lei},
  journal={arXiv preprint arXiv:2501.05098},
  year={2025}
}

@article{harvey2020robust,
  title={Robust motion in-betweening},
  author={Harvey, F{\'e}lix G and Yurick, Mike and Nowrouzezahrai, Derek and Pal, Christopher},
  journal={ACM Transactions on Graphics (TOG)},
  volume={39},
  number={4},
  pages={60--1},
  year={2020},
  publisher={ACM New York, NY, USA}
}

@article{yue2025rl,
  title={RL from Physical Feedback: Aligning Large Motion Models with Humanoid Control},
  author={Yue, Junpeng and Wang, Zepeng and Wang, Yuxuan and Zeng, Weishuai and Wang, Jiangxing and Xu, Xinrun and Zhang, Yu and Zheng, Sipeng and Ding, Ziluo and Lu, Zongqing},
  journal={arXiv preprint arXiv:2506.12769},
  year={2025}
}

@inproceedings{shen2024world,
  title={World-grounded human motion recovery via gravity-view coordinates},
  author={Shen, Zehong and Pi, Huaijin and Xia, Yan and Cen, Zhi and Peng, Sida and Hu, Zechen and Bao, Hujun and Hu, Ruizhen and Zhou, Xiaowei},
  booktitle={SIGGRAPH Asia 2024 Conference Papers},
  pages={1--11},
  year={2024}
}

@software{ze2025gmr,
  title={GMR: General Motion Retargeting},
  author= {Yanjie Ze and João Pedro Araújo and Jiajun Wu and C. Karen Liu},
  year= {2025},
  url= {https://github.com/YanjieZe/GMR},
  note= {GitHub repository}
}

@article{zhang2025track,
  title={Track any motions under any disturbances},
  author={Zhang, Zhikai and Guo, Jun and Chen, Chao and Wang, Jilong and Lin, Chenghuai and Lian, Yunrui and Xue, Han and Wang, Zhenrong and Liu, Maoqi and Lyu, Jiangran and others},
  journal={arXiv preprint arXiv:2509.13833},
  year={2025}
}

@inproceedings{he2025omnih2o,
  title={OmniH2O: Universal and Dexterous Human-to-Humanoid Whole-Body Teleoperation and Learning},
  author={He, Tairan and Luo, Zhengyi and He, Xialin and Xiao, Wenli and Zhang, Chong and Zhang, Weinan and Kitani, Kris M and Liu, Changliu and Shi, Guanya},
  booktitle={Conference on Robot Learning},
  pages={1516--1540},
  year={2025},
  organization={PMLR}
}

@inproceedings{he2025hover,
  title={Hover: Versatile neural whole-body controller for humanoid robots},
  author={He, Tairan and Xiao, Wenli and Lin, Toru and Luo, Zhengyi and Xu, Zhenjia and Jiang, Zhenyu and Kautz, Jan and Liu, Changliu and Shi, Guanya and Wang, Xiaolong and others},
  booktitle={2025 IEEE International Conference on Robotics and Automation (ICRA)},
  pages={9989--9996},
  year={2025},
  organization={IEEE}
}

@misc{unitree-g1,
  author = {{Unitree Robotics}},
  title = {{Humanoid robot G1\_Humanoid Robot Functions\_Humanoid Robot Price | Unitree Robotics}},
  year = {2025},
  url = {https://www.unitree.com/g1/},
  note = {\url{https://www.unitree.com/g1/}}
}

@article{he2025viral,
  title={VIRAL: Visual Sim-to-Real at Scale for Humanoid Loco-Manipulation},
  author={He, Tairan and Wang, Zi and Xue, Haoru and Ben, Qingwei and Luo, Zhengyi and Xiao, Wenli and Yuan, Ye and Da, Xingye and Casta{\~n}eda, Fernando and Sastry, Shankar and others},
  journal={arXiv preprint arXiv:2511.15200},
  year={2025}
}

@article{su2025hitter,
  title={Hitter: A humanoid table tennis robot via hierarchical planning and learning},
  author={Su, Zhi and Zhang, Bike and Rahmanian, Nima and Gao, Yuman and Liao, Qiayuan and Regan, Caitlin and Sreenath, Koushil and Sastry, S Shankar},
  journal={arXiv preprint arXiv:2508.21043},
  year={2025}
}

@article{li2025train,
  title={How to Train Your Robots? The Impact of Demonstration Modality on Imitation Learning},
  author={Li, Haozhuo and Cui, Yuchen and Sadigh, Dorsa},
  journal={arXiv preprint arXiv:2503.07017},
  year={2025}
}

@article{ahmad2026vision,
  title={Vision-Language Models on the Edge for Real-Time Robotic Perception},
  author={Ahmad, Sarat and Hafeez, Maryam and Zaidi, Syed Ali Raza},
  journal={arXiv preprint arXiv:2601.14921},
  year={2026}
}

@inproceedings{huang2010lcm,
  title={LCM: Lightweight communications and marshalling},
  author={Huang, Albert S and Olson, Edwin and Moore, David C},
  booktitle={2010 IEEE/RSJ International Conference on Intelligent Robots and Systems},
  pages={4057--4062},
  year={2010},
  organization={IEEE}
}

@article{schulman2017proximal,
  title={Proximal policy optimization algorithms},
  author={Schulman, John and Wolski, Filip and Dhariwal, Prafulla and Radford, Alec and Klimov, Oleg},
  journal={arXiv preprint arXiv:1707.06347},
  year={2017}
}

\clearpage

\beginappendix
\crefalias{section}{appendix}
\crefalias{subsection}{appendix}
\section{RL training details}
We provide a detailed training environment setting in this section.

\subsection{Observation} \label{training:obs}
We adopt a single-stage reinforcement learning framework and do not employ a teacher–student architecture \cite{ze2025twist,chen2025gmt,he2025omnih2o}.

\textbf{Critic Observations} (Privileged Information).
The critic is provided with privileged observations to facilitate stable value estimation. These include proprioception information: linear and angular velocities, joint positions and velocities, keypoint positions and rotations, root position and orientation (expressed in the local coordinate frame), and the last action. In addition, we augment the critic with physical signals, including contact masks, center-of-mass (CoM), and center-of-pressure (CoP). Task-related reference information is also included, consisting of target joint positions and velocities, target keypoint position differences relative to the root, target linear and angular velocities (all in the local coordinate frame), as well as target contact masks, CoM, and CoP signals.

\textbf{Actor Observations}.
The actor receives a restricted set of observations without privileged physical signals. Specifically, it observes linear and angular velocities, root position and orientation, joint positions and velocities, and the previous action. Task-related reference observations further include target joint positions and velocities, target selected keypoint (including wrists and ankles) position differences relative to the root, target linear and angular velocities, as well as reference CoM and CoP signals.

\textbf{Residual Policy Observations}.
The residual action policy uses the same observation space as the actor network to ensure consistency during fast adaptation.

\subsection{Reward} \label{training:reward}

We summarize the reward terms used for training both the base policy and the residual policy in Table \ref{tab:rewardterms}. To ensure consistency and generality, the same reward formulation is applied to both training stages.

\textbf{CoM–Aware Adaptive Tracking Weight.}
In addition to the explicit stability reward, we further modulate all motion-tracking rewards using an adaptive weight derived from the reference motion stability.
Let $d_\text{CoM-CoP}^\text{ref}$ denote the horizontal distance between the reference CoM and CoP. When this distance exceeds a critical threshold $\tau$, the reference motion is considered physically aggressive or unstable. In such cases, we progressively down-weight tracking-related rewards to grant the controller more freedom to maintain balance.

Concretely, we define an adaptive tracking weight
\begin{equation}
w_{\text{track}} =
\mathrm{clip}\left(
\exp\left(-\frac{\max(0, d^{\text{ref}}_{\text{CoM-CoP}} - \tau)}{\kappa}\right)
, w_{\min}, 1
\right),
\end{equation}
where $\tau$ is the stability threshold, $\kappa$ controls the decay rate, and $w_\text{min}$ is a small lower bound to preserve minimal tracking awareness. When $d_\text{CoM-CoP}^\text{ref}\leq\tau$ the weight remains close to 1, corresponding to full tracking. As the reference becomes increasingly unbalanced, the weight decays smoothly toward $w_\text{min}$.

This adaptive factor is multiplicatively applied to key tracking rewards, including body position, anchor position, and joint position terms:
\begin{equation}
r^{\prime}_{\text{track}} = w_{\text{track}} \cdot r_{\text{track}}.
\end{equation}
Such modulation enables a continuous trade-off between motion fidelity and physical stability, preventing the policy from over-committing to physically implausible references while retaining accurate tracking for stable motions.

\begin{table}[h]
\centering
\caption{Unified reward formulation using Gaussian-shaped tracking scores.}
\label{tab:rewardterms}
\begin{tabular}{llc}
\toprule
\textbf{Reward Terms} & \textbf{Expression} & \textbf{Weight} \\
\midrule
\multicolumn{3}{l}{\emph{\textbf{Task reward}}}\\
Joint position
& $\displaystyle
\exp\!\Big(
-\big( \tfrac{1}{|\mathcal{J}_{\mathrm{target}}|}
\sum_{b\in\mathcal{J}_{\mathrm{target}}}
\|\mathbf{q}^{\mathrm{des}}_{b}-\mathbf{q}_{b}\|^{2} \big) / 0.4^{2}
\Big)$
& $1.0$\\[2mm]
Body position
& $\displaystyle
\exp\!\Big(
-\big( \tfrac{1}{|\mathcal{B}_{\mathrm{target}}|}
\sum_{b\in\mathcal{B}_{\mathrm{target}}}
\|\mathbf{p}^{\mathrm{des}}_{b}-\mathbf{p}_{b}\|^{2} \big) / 0.3^{2}
\Big)$
& $1.0$\\[2mm]
Body orientation
& $\displaystyle
\exp\!\Big(
-\big( \tfrac{1}{|\mathcal{B}_{\mathrm{target}}|}
\sum_{b\in\mathcal{B}_{\mathrm{target}}}
\|\log(R^{\mathrm{des}}_{b}R_{b}^{\top})\|^{2} \big) / 0.4^{2}
\Big)$
& $1.0$\\[2mm]
Body linear velocity
& $\displaystyle
\exp\!\Big(
-\big( \tfrac{1}{|\mathcal{B}_{\mathrm{target}}|}
\sum_{b\in\mathcal{B}_{\mathrm{target}}}
\|\mathbf{v}^{\mathrm{des}}_{b}-\mathbf{v}_{b}\|^{2} \big) / 1.0^{2}
\Big)$
& $1.0$\\[2mm]
Body angular velocity
& $\displaystyle
\exp\!\Big(
-\big( \tfrac{1}{|\mathcal{B}_{\mathrm{target}}|}
\sum_{b\in\mathcal{B}_{\mathrm{target}}}
\|\boldsymbol{\omega}^{\mathrm{des}}_{b}-\boldsymbol{\omega}_{b}\|^{2} \big) / 3.14^{2}
\Big)$
& $1.0$\\[2mm]
Anchor position
& $\displaystyle
\exp\!\Big(
-\|\mathbf{p}^{\mathrm{des}}_{\text{anchor}}-\mathbf{p}_{\text{anchor}}\|^{2} / 0.3^{2}
\Big)$
& $0.5$\\[2mm]
Anchor orientation
& $\displaystyle
\exp\!\Big(
-\|\log(R^{\mathrm{des}}_{\text{anchor}}R_{\text{anchor}}^{\top})\|^{2} / 0.4^{2}
\Big)$
& $0.5$\\[2mm]
Contact mask
& $1-\|\mathbf{c}_{\text{robot}}-\mathbf{c}_{\text{ref}}\|_1/2$
& $1.0$\\[2mm]

\midrule
\multicolumn{3}{l}{\emph{\textbf{Regularization}}}\\
Action smoothness
& $\|\mathbf{a}_{t}-\mathbf{a}_{t-1}\|^{2}$
& $-0.1$\\[2mm]
Undesired self-contacts
& $\sum_{b\notin\mathcal{B}_{\mathrm{ee}}}
\mathbf{1}\!\left[\|f^{\mathrm{self}}_{b}\|>1  \text{N}\right]$
& $-0.1$\\[2mm]
Robot balance
& $\displaystyle
\exp\!\Big(
-\max(\|\mathbf{CoM}^{\mathrm{xy}}_{\text{robot}}-\mathbf{CoP}_{\text{xy}}^{xy}\|_{2} - 0.12, 0) / 0.08^{2}
\Big)$
& $-2.0$\\[2mm]

\midrule
\multicolumn{3}{l}{\emph{\textbf{Penalty}}}\\
Termination
& $\mathbf{1}\!\left[{\mathbf{termination}}\right]$
& $-1.0$\\[2mm]
Joint position limit
& $\sum_{j=1}^{N}\big[\max(l_j-\theta_j,0)+\max(\theta_j-u_j,0)\big]$
& $-10.0$\\[2mm]
\bottomrule
\end{tabular}
\end{table}

\subsection{Action}

We control the 29 degrees of freedom (DoF) of the Unitree G1 humanoid \cite{unitree-g1} using a proportional–derivative (PD) controller. The policy outputs target joint position offsets that are tracked by the PD controller.

Specifically, for each joint $j$, the target joint position at time step $t$ is computed as
\begin{equation}
    q_{j,t}=\hat{q}_j+\alpha_ja_{j,t},
\end{equation}
where $\hat{q}_j$ denotes a fixed default joint configuration, $a_{j,t}$ is the policy output, and $\alpha_j$ is a joint-specific scaling factor.

The scaling factor $\alpha_j$ is defined as
\begin{equation}
    \alpha_j=0.25\cdot\frac{\tau_{j,\mathrm{max}}}{k_{p,j}},
\end{equation}
where $\tau_{j,\mathrm{max}}$ is the maximum allowable torque for joint $j$, and $k_{p,j}$ is the proportional gain (stiffness) of the PD controller. This formulation ensures that the resulting joint position commands respect actuator torque limits.

The stiffness and damping parameters of the PD controller are set following prior work \cite{liao2025beyondmimic}, and are kept fixed throughout training and evaluation.

\subsection{Termination} \label{appendix:termination}
An episode is terminated and reset under one of the following three conditions:

\textbf{Fall Detection}. The episode is terminated if the humanoid is considered to have fallen. Specifically, a fall is detected if either
(i) the height of the robot root deviates from the reference motion by more than 0.25 m, or
(ii) the difference in the vertical gravity component between the reference motion’s anchor orientation and the robot’s anchor orientation exceeds 0.8 rad.

\textbf{Tracking Failure}. The episode is terminated if tracking fails, defined as any end-effector body (left or right ankle, or left or right hand), having a height deviation greater than 0.25 m from the corresponding reference motion.

\textbf{Task Completion}. If the humanoid successfully tracks the reference motion until the final frame without triggering any termination condition, the episode is considered successful and ends naturally.

\subsection{Domain randomization}

Detailed domain randomization setups are summarized in Table \ref{tab:domain_rand}, following \cite{liao2025beyondmimic}.

\begin{table}[h]
\centering
\caption{\textbf{Domain randomization parameters.} ($\mathcal{U}[\cdot]$: uniform distribution)}
\label{tab:domain_rand}
\begin{tabular}{ll}
\toprule
\textbf{Domain Randomization} & \textbf{Sampling Distribution} \\
\midrule
\multicolumn{2}{l}{\emph{\textbf{Physical parameters}}}\\
Static friction coefficients & $\mu_{\text{static}} \sim \mathcal{U}[0.3,\,1.6]$ \\[1mm]
Dynamic friction coefficients & $\mu_{\text{dynamic}} \sim \mathcal{U}[0.3,\,1.2]$ \\[1mm]
Restitution coefficient & $e_{\text{rest}} \sim \mathcal{U}[0,\,0.5]$ \\[1mm]
Default joint positions [rad] & $ \Delta\theta^0_j \!\sim  \mathcal{U}[-0.01,\,0.01] $\\[1mm]
Torso's CoM offset [m]& $\Delta x\!\sim\!\mathcal{U}[-0.025,0.025]$,  $\ \Delta y\!\sim\!\mathcal{U}[-0.05,0.05]$, $\  \Delta z\!\sim\!\mathcal{U}[-0.05,0.05]$ \\[1mm]
\midrule
\multicolumn{2}{l}{\emph{\textbf{Root velocity perturbations}}}\\[1mm]
Root linear vel [m/s]& $v_x\!\sim\!\mathcal{U}[-0.5,0.5],\ v_y\!\sim\!\mathcal{U}[-0.5,0.5],\ v_z\!\sim\!\mathcal{U}[-0.2,0.2]$ \\[1mm]
Push duration [s]& $\Delta t \sim \mathcal{U}[1.0,\,3.0]$ \\[1mm]
Root angular vel [rad/s]& $\omega_x,\ \omega_y\!\sim\!\mathcal{U}[-0.52,0.52],\ \omega_z\!\sim\!\mathcal{U}[-0.78,0.78]$ \\[1mm]
\bottomrule
\end{tabular}
\end{table}

\subsection{Adpative sampling}

To improve learning efficiency on complex and long-horizon motions, we introduce a two-stage adaptive sampling strategy that dynamically resamples training motions based on failure statistics collected during policy learning. The first stage operates at the clip level, where each target motion assigned to an environment is uniformly segmented into fixed-length clips. During training, we record the clip index corresponding to each episode termination or failure. At regular intervals, we estimate a failure rate for each clip and update a per-motion clip sampling distribution accordingly. Environments are then reinitialized by resampling the starting clip based on this distribution, biasing training toward temporally localized motion segments that are more difficult to track. This clip-level resampling mechanism enables the policy to focus on challenging sub-sequences within long motions, significantly improving sample efficiency for learning complex transitions, following a strategy similar to \cite{liao2025beyondmimic}.

The second stage operates at the motion level, targeting difficulty imbalance across the entire motion dataset. We aggregate failure statistics for each motion across all environments and periodically update a global motion sampling distribution proportional to their observed failure rates. At resampling steps, all environments are reassigned new target motions drawn from this distribution, increasing the training frequency of motions that are globally harder to execute. By jointly applying clip-level and motion-level adaptive resampling, the proposed two-stage strategy balances fine-grained temporal difficulty with coarse-grained motion complexity, enabling more effective learning of diverse and high-difficulty whole-body behaviors.

\subsection{Model details}

As described in the main paper, following prior work \cite{chen2025gmt,wang2025experts}, we adopt a mixture-of-experts (MoE) architecture to model diverse humanoid motion dynamics. Both the policy (actor) and value (critic) networks consist of multiple expert networks and a shared gating network.

\begin{table}[h]
\centering
\caption{\textbf{Hyperparameters for policy training.}}
\label{tab:hyperparameters}
\begin{tabular}{ll}
\toprule
\textbf{Term} & \textbf{Value} \\
\midrule
Num parallel envs & 4096 \\[1mm]
Num steps per env & 24 \\[1mm]
Learning epochs & 5 \\[1mm]
Num mini\_batches & 4 \\[1mm]
Discount $\gamma$ & 0.99 \\[1mm]
GAE $\lambda$ & 0.95 \\[1mm]
PPO clip ratio & 0.2 \\[1mm]
Value loss coefficient & 1.0 \\[1mm]
Entropy coefficient & 1e-2 \\[1mm]
Initial learning rate & 1e-3 \\[1mm]
Max learning rate & 1e-2 \\[1mm]
Min learning rate & 1e-5 \\[1mm]
Desired KL & 0.01 \\[1mm]
Max grad norm & 1.0 \\[1mm]
MLP hidden layers & [512, 256, 128] \\[1mm]
Num experts & 12 \\[1mm]
Parseval scaling factor & 2.0 \\[1mm]
Parseval loss coefficient & 1e-7 \\[1mm]
KL regularization coefficient & 1e-4 \\[1mm]
\bottomrule
\end{tabular}
\end{table}

Each expert network is implemented as a three-layer multilayer perceptron (MLP) with hidden layer sizes of 512, 256, and 128, respectively. The gating network takes the same observation as input and outputs state-dependent mixture weights over experts. The hidden layer dimension of gating network is set to half of the corresponding input dimension for both the actor and critic. At each timestep, the final policy action (or value estimate) is computed as a weighted sum of the outputs from all experts.

For fast adaptation, the residual (delta) policy is parameterized by a lightweight three-layer MLP with the same hidden layer sizes (512, 256, 128), but without an MoE structure. This design reduces optimization complexity while retaining sufficient expressivity for adaptation. All networks use the ELU activation function.

Other hyperparameters related to policy training are shown in Table \ref{tab:hyperparameters}.

\section{Proofs of Stability Analysis of Residual Policy Adaptation} \label{prof:prop}

\noindent\textit{Proof of Proposition~\ref{prop:Lipschitz} (Lipschitz Continuity with Scaled Orthogonality).} The residual policy $\pi_r$ is a composition of affine transformations and activation functions:
\begin{equation}
    \pi_r = f_L \circ \phi \circ f_{L-1} \circ \cdots \circ \phi \circ f_1,
\quad f_l(x)=W_l x + b_l.
\end{equation}

Since the activation function $\phi(\cdot)$ is assumed to be 1-Lipschitz, it satisfies
\begin{equation}
    \|\phi(x)-\phi(y)\|_2 \le \|x-y\|_2,\quad \forall x,y.
\end{equation}

Each affine transformation $f_l$ is Lipschitz continuous with constant $\|W_l\|_2$:
\begin{equation}
    \|f_l(x)-f_l(y)\|_2 = \|W_l(x-y)\|_2 \le \|W_l\|_2\|x-y\|_2,
\end{equation}
where the bias terms cancel out and thus do not affect the Lipschitz constant.

By the composition property of Lipschitz functions,
\begin{equation}
    \mathrm{Lip}(\pi_r) \le \prod_{l=1}^L \|W_l\|_2.
\end{equation}

Under Parseval regularization, we assume that for each layer $l=1,\ldots,L-1$,
\begin{equation}
    \|W_l^\top W_l - sI\|_2 \le \varepsilon_l.
\end{equation}

Since $\|W_l\|_2^2 = \lambda_{\max}(W_l^\top W_l)$, it follows that
\begin{equation}
    \|W_l\|_2^2
    = \lambda_{\max}(W_l^\top W_l)
    \le \lambda_{\max}(sI) + \|W_l^\top W_l - sI\|_2
    \le s + \varepsilon_l,
\end{equation}
and hence
\begin{equation}
    \|W_l\|_2 \le \sqrt{s + \varepsilon_l}.
\end{equation}

Substituting this bound yields
\begin{equation}
    \mathrm{Lip}(\pi_r)
    \le \left(\prod_{l=1}^{L-1}\sqrt{s + \varepsilon_l}\right)\|W_L\|_2.
\end{equation}

Thus, the residual policy $\pi_r$ is Lipschitz continuous with a bounded Lipschitz constant.
\hfill$\square$


\noindent\textit{Proof of Proposition~\ref{prop:stability} (Distributional Stability under KL Constraint).} For Gaussian policies with identical covariance $\Sigma$, the KL divergence reduces to
\begin{equation}
    D_{\mathrm{KL}}(\pi\|\pi_b)
    = \frac{1}{2}(\mu-\mu_b)^\top \Sigma^{-1}(\mu-\mu_b).
\end{equation}
Applying the KL constraint yields
\begin{equation}
    (\mu-\mu_b)^\top \Sigma^{-1}(\mu-\mu_b) \le 2\epsilon.
\end{equation}
By the Rayleigh quotient bound,
\begin{equation}
    (\mu-\mu_b)^\top \Sigma^{-1}(\mu-\mu_b)
    \ge \lambda_{\min}(\Sigma^{-1}) \|\mu-\mu_b\|_2^2
    = \frac{1}{\lambda_{\max}(\Sigma)} \|\mu-\mu_b\|_2^2.
\end{equation}
Combining the above inequalities gives
\begin{equation}
    \|\mu-\mu_b\|_2^2 \le 2\epsilon \lambda_{\max}(\Sigma).
\end{equation}
Recalling that $\mu=\mu_b+\mu_r$ holds pointwise for each state, we have $\mu_r=\mu-\mu_b$, and thus
\begin{equation}
    \|\mu_r\|_2 \le \sqrt{2\epsilon\,\lambda_{\max}(\Sigma)},
\end{equation}
which directly bounds the magnitude of the residual policy output.
\hfill$\square$

\section{Dataset description} \label{appendix:dataset}
\subsection{Dataset composition}
As described in the \ref{exp:dataset}, we employ multiple datasets for training and evaluation to comprehensively assess the generality, robustness, and scalability of FAST. In addition to commonly used motion datasets such as AMASS, OMOMO, and LaFan1, we further incorporate Motion-X and an in-house motion capture dataset. These datasets differ in motion quality, diversity, and acquisition modality, and are used for complementary training and evaluation purposes, as detailed below.

\textbf{Motion-X Dataset.} Motion-X is a large-scale 3D expressive full-body motion dataset constructed from a combination of large amounts of online videos and eight existing motion datasets. Compared to optically captured datasets such as AMASS, motions in Motion-X, especially those extracted from videos, exhibit lower geometric accuracy, but offer substantially greater diversity in motion types and scene coverage \cite{zhang2025motion}.

We use a leg-dominant subset of the Motion-X dataset for both base model evaluation and fast adaptation. Motion-X is selected due to its relatively low-quality and noisy motion signals, which provide a challenging benchmark for assessing robustness to imperfect reference motions. All Motion-X data are processed using the same retargeting pipeline described in Section~\ref{method:retarget}.

To emphasize high-dynamic behaviors, we cluster the Motion-X dataset based on text annotations and extract motion clips dominated by leg-intensive actions. The resulting subset contains 6,009 clips (approximately 15.24 hours of motion data) and is consistently used as both the test set for base model comparison and the target dataset for fast adaptation experiments.

\textbf{In-House Motion Capture Dataset.} In addition to public datasets, we collect an in-house indoor motion capture dataset covering a broad range of full-body motions. This dataset includes both common locomotion and manipulation behaviors, as well as high-dynamic motions, and serves as an additional source of real-world motion data.

From this dataset, we further identify and curate a subset of high-dynamic motions specifically for evaluating the effectiveness of Center-of-Mass-Aware Control. This subset includes actions such as single-leg balancing, kicking, and punching, with each motion sequence lasting at least 5 seconds. From the curated high-dynamic subset, we randomly select 12.6 minutes of data as high-dynamic motions included in the training set, and reserve 51.7 minutes as an out-of-distribution (OOD) zero-shot evaluation set. The composition of the OOD high-dynamic dataset is illustrated in Figure. \ref{fig:mocap_dynamic}.

\begin{figure}[h]
	\centering
        \includegraphics[width=0.5\linewidth]{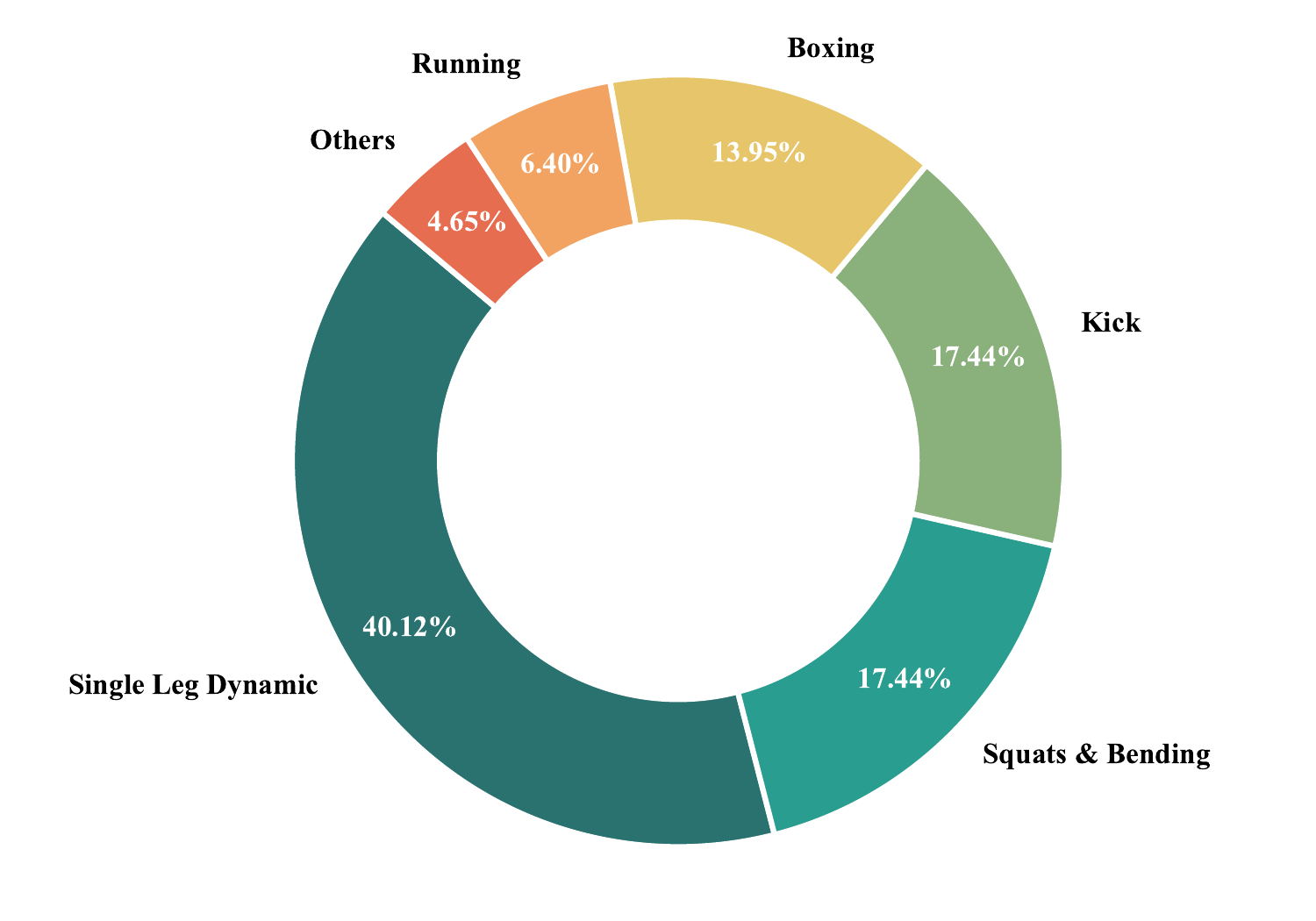}
\caption{The composition of the OOD high-dynamic dataset.}
 \label{fig:mocap_dynamic}
\end{figure}

\subsection{Dataset preprocessing} \label{data_augmentation}
The retargeted motions are further processed through a series of steps before being used for policy training. These steps aim to improve motion diversity, physical consistency, and the availability of auxiliary training signals.

\textbf{Motion Augmentation via Physical Attribute Perturbation.} To increase motion diversity and improve robustness to distribution shifts, we apply targeted data augmentation directly to the retargeted humanoid motions by modifying selected physical attributes. Specifically, we adjust the global root translation and velocity to generate motions with different execution speeds.

In addition, we perturb knee joint angles to synthesize varied walking postures while preserving overall motion coherence. To compensate for these knee modifications, the pitch angles of the hip and ankle joints are adjusted by half of the knee angle change. This heuristic maintains kinematic consistency and avoids introducing unrealistic poses. These controlled perturbations effectively expand the motion distribution without violating basic physical plausibility, encouraging the policy to generalize across a wider range of motion styles and speeds.

\textbf{Global Height Adjustment for Physical Consistency.} Due to retargeting and the aforementioned pose perturbations, the global vertical translation of the motion may no longer satisfy basic physical constraints, such as maintaining foot contact above the ground plane. To address this issue, we perform a global height correction on each motion sequence.

Specifically, we estimate a reasonable minimum foot height using the lower quartile (25th percentile) of foot heights over the entire motion sequence. Based on this estimate, a vertical offset is applied uniformly to all frames in the sequence to ensure that the feet remain physically plausible with respect to the ground. This procedure preserves relative motion structure while restoring global consistency.

\textbf{Contact Mask Estimation.} Following height adjustment, we compute contact masks based on the distance between the feet and the ground. The contact threshold is determined empirically using MuJoCo visualization, where foot–ground contact can be visually inspected through rendered contact indicators.

We note that different motion batches may require different contact thresholds due to variations in retargeting quality, physical constraint satisfaction, and motion capture accuracy. Therefore, the threshold is adjusted on a per-batch basis to ensure reliable contact estimation.

\textbf{Motion Replay and Auxiliary Signal Extraction.} Finally, we use IsaacLab as a forward dynamics engine to replay the processed motion sequences. By combining motion replay with finite-difference approximations, we extract additional auxiliary signals required for training, including body positions, body orientations, linear velocities, angular velocities, center-of-mass (CoM), center-of-pressure (CoP), joint positions and joint velocities.

The resulting motion data thus contains rich kinematic and physical annotations, which are used to support stable and effective training of the general whole-body controller.

\section{Experiment details}
\subsection{Task Failure conditions} \label{evaluation_failure}

The task failure conditions vary across experiments to reflect different evaluation objectives.

For \textbf{fast adaptation experiments}, the failure conditions are identical to those described in Appendix \ref{appendix:termination}, ensuring consistency with the base training setup.

For \textbf{base model comparison}, to better highlight differences in global position tracking accuracy, we introduce an additional failure criterion: an episode is terminated if the robot’s root position deviates from the reference motion by more than 2.0 m. This criterion emphasizes long-horizon global tracking performance.

For \textbf{CoM-Aware Control experiments on high-dynamic motions}, we further tighten the global root position deviation threshold to 1.5 m. This stricter criterion more sensitively captures failures caused by balance loss that lead to excessive global drift. At the same time, we relax the height deviation constraints on end-effectors, so that the success metric better reflects balance-related performance rather than strict limb-level tracking accuracy.

Moreover, in the latter two experimental settings, we do not introduce an explicit orientation-based termination criterion. This is because large orientation instabilities are already implicitly captured by the global root position deviation condition, which reflects physically meaningful failures such as loss of balance or unrecoverable drift. Moreover, for the CoM-aware control evaluation, introducing additional rotation-based termination may prematurely terminate relatively conservative but stable behaviors that involve transient torso tilting, thereby shifting the focus away from the core objective of balance robustness. By relying on global position drift as the primary failure signal, we better assess the controller’s ability to maintain physically stable and recoverable motion under challenging dynamics.

\subsection{Deployment}
The control policy is executed at an inference frequency of 50 Hz, while the low-level control interface operates at 500 Hz to ensure smooth and responsive actuation. Communication between the high-level policy and the low-level controller is handled via Lightweight Communications and Marshalling (LCM) \cite{huang2010lcm}, enabling efficient real-time data exchange.

State estimation, including linear velocity and position estimation, is implemented using the legged\_control2 library \cite{liao2025beyondmimic}. To support fast and stable inference during real-world deployment, the trained policy is exported to the ONNX format and executed using an optimized runtime.

All policy-related modules, including the neural policy and the state estimator, are deployed directly on the robot computer. Motion capture acquisition and retargeting are executed on an external workstation (Inter Core Ultra 7 255HX, 20 cores). The two machines communicate wirelessly via Wi-Fi using ZeroMQ, maintaining a stable bidirectional motion streaming rate of 50 Hz for reference transmission and feedback synchronization.

\clearpage

\end{document}